\def\eqref#1{equation~\ref{#1}}
\def\1{\bm{1}}
\def\rve{{\mathbf{e}}}
\def\rmD{{\mathbf{D}}}
\def\rmL{{\mathbf{L}}}
\def\rmP{{\mathbf{P}}}
\newcommand{\normin}[1]{\left\lVert #1 \right\rVert}
\def\vx{{\bm{x}}}
\def\mL{{\bm{L}}}
\DeclareMathAlphabet{\mathsfit}{\encodingdefault}{\sfdefault}{m}{sl}
\SetMathAlphabet{\mathsfit}{bold}{\encodingdefault}{\sfdefault}{bx}{n}
\def\gS{{\mathcal{S}}}
\def\sE{{\mathbb{E}}}
\def\sP{{\mathbb{P}}}
\def\sR{{\mathbb{R}}}
\newcommand{\E}{\mathbb{E}}
\newcommand{\R}{\mathbb{R}}
\newcommand\restr[2]{{% we make the whole thing an ordinary symbol
  \left.\kern-\nulldelimiterspace % automatically resize the bar with \right
  #1 % the function
  \littletaller % pretend it's a little taller at normal size
  \right|_{#2} % this is the delimiter
  }}
\renewcommand*{\backref}[1]{}
\renewcommand*{\backrefalt}[4]{%
    \ifcase #1%
          \or Cited on page~#2.%
    \else Cited on pages~#2.%
\fi%
}
\newtheorem{theorem}{Theorem}
\newtheorem{definition}[theorem]{Definition}    % Shares the counter with theorem
\newtheorem{lemma}[theorem]{Lemma}              % Shares the counter with theorem
\newtheorem{corollary}[theorem]{Corollary}      % Shares the counter with theorem
\newtheorem{assumption}{Assumption}    % Shares the counter with theorem
\newcommand{\thickoverline}[1]{%
  \overline{\mathrel{#1}\kern-1.5pt\rule{0pt}{0.5ex}}%
}
\crefname{dataset}{Dataset}{Datasets}
\crefname{method}{Method Step}{Method Steps}
\crefname{problem}{Problem}{Problems}
\crefname{definition}{Defn.}{Defns.}
\crefname{theorem}{Thm.}{Thms.}
\crefname{proposition}{Prop.}{Props.}
\crefname{assumption}{Asm.}{Asm.}
\crefname{remark}{Remark}{Remarks}
\crefname{example}{Eg.}{Egs.}
\crefname{equation}{Eqn.}{Eqns.}
\crefname{appendix}{Apx.}{Apx.}
\definecolor{teal}{rgb}{0.0, 0.5, 0.5}
\definecolor{amethyst}{rgb}{0.6, 0.4, 0.8}
\definecolor{thulianpink}{rgb}{0.87, 0.44, 0.63}
\definecolor{tiffanyblue}{rgb}{0.04, 0.73, 0.71}
\definecolor{pear}{rgb}{0.82, 0.89, 0.19}
\definecolor{applegreen}{rgb}{0.55, 0.71, 0.0}
\definecolor{burgundy}{rgb}{0.5, 0.0, 0.13}
\definecolor{persianindigo}{rgb}{0.2, 0.07, 0.48}
\tiny\color{gray},
\newtcolorbox{subbox}[1][]{
  colback=black!5!white, % Background color of the box
  colframe=black!75!black, % Frame color
  base={#1}
}
\DeclareRobustCommand{\parhead}[1]{\textbf{#1}~}
\newcommand{\x}{\mathbf{x}}
\newcommand{\z}{\mathbf{z}}
\newcommand{\A}{\mathbf{A}}
\newcommand{\D}{\mathbf{D}}
\renewcommand{\c}{\mathbf{c}}
\newcommand{\C}{\mathcal{C}}
\newcommand{\Z}{\mathcal{Z}}
\newcommand{\X}{\mathcal{X}}
\newcommand{\deltaBold}{\boldsymbol{\delta}}
\newcommand{\deltaz}{\deltaBold^z}
\newcommand{\hatdeltaz}{\hat{\deltaBold}^z}
\newcommand{\deltac}{\deltaBold^c}
\newcommand{\hatdeltac}{\hat{\deltaBold}^c}
\newcommand{\thicktilde}[1]{\mathbf{\tilde{\text{$#1$}}}}
\newcommand{\isae}{\texttt{SSAE}\xspace}
\newcommand{\aff}{\texttt{aff}\xspace}
\icmltitlerunning{Sparse Shift Autoencoders}
\begin{document}
\ClearShipoutPicture
\twocolumn[
  \icmltitle{Sparse Shift Autoencoders for Identifying \\Concepts from Large Language Model Activations}
% Identifying Concepts from Large Language Model Activations with Sparse Shift Autoencoders
  % It is OKAY to include author information, even for blind submissions: the
  % style file will automatically remove it for you unless you've provided
  % the [accepted] option to the icml2026 package.

  % List of affiliations: The first argument should be a (short) identifier you
  % will use later to specify author affiliations Academic affiliations
  % should list Department, University, City, Region, Country Industry
  % affiliations should list Company, City, Region, Country

  % You can specify symbols, otherwise they are numbered in order. Ideally, you
  % should not use this facility. Affiliations will be numbered in order of
  % appearance and this is the preferred way.
  \icmlsetsymbol{equal}{*}

  \begin{icmlauthorlist}
    \icmlauthor{Shruti Joshi}{yyy}
    \icmlauthor{Andrea Dittadi}{comp,rrr}
    \icmlauthor{S\'ebastien Lachapelle}{zzz}
    \icmlauthor{Dhanya Sridhar}{yyy}
    %\icmlauthor{}{sch}
    %\icmlauthor{}{sch}
    %\icmlauthor{}{sch}
  \end{icmlauthorlist}

  \icmlaffiliation{yyy}{Mila-Qu\'ebec AI Institute \& Universit\'e de Montr\'eal}
  \icmlaffiliation{comp}{Helmholtz AI and Technical University of M\"unich}
  \icmlaffiliation{rrr}{MPI for Intelligent Systems, T\"ubingen}
\icmlaffiliation{zzz}{Samsung - SAIT AI Lab, Montr\'eal}
  \icmlcorrespondingauthor{Shruti Joshi}{shrutijoshi98@gmail.com}

  % You may provide any keywords that you find helpful for describing your
  % paper; these are used to populate the "keywords" metadata in the PDF but
  % will not be shown in the document
  \icmlkeywords{Machine Learning, ICML}

  \vskip 0.3in
]

% this must go after the closing bracket ] following \twocolumn[ ...

% This command actually creates the footnote in the first column listing the
% affiliations and the copyright notice. The command takes one argument, which
% is text to display at the start of the footnote. The \icmlEqualContribution
% command is standard text for equal contribution. Remove it (just {}) if you
% do not need this facility.

% Use ONE of the following lines. DO NOT remove the command.
% If you have no special notice, KEEP empty braces:
\printAffiliationsAndNotice{}  % no special notice (required even if empty)
% Or, if applicable, use the standard equal contribution text:
% \printAffiliationsAndNotice{\icmlEqualContribution}

\begin{abstract}
  Unsupervised approaches to large language model (LLM) interpretability, such as sparse autoencoders (SAEs), offer a way to decode LLM activations into interpretable and, ideally, controllable concepts. On the one hand, these approaches alleviate the need for supervision from concept labels, paired prompts, or explicit causal models. On the other hand, without additional assumptions, SAEs are not guaranteed to be identifiable. In practice, they may learn latent dimensions that entangle multiple underlying concepts. If we use these dimensions to extract vectors for steering specific LLM behaviours, this non-identifiability might result in interventions that inadvertently affect unrelated properties. In this paper, we bring the question of identifiability to the forefront of LLM interpretability research. Specifically, we introduce Sparse Shift Autoencoders ({\isae}s) which learn sparse representations of \textit{differences} between embeddings rather than the embeddings themselves. Crucially, we show that {\isae}s are identifiable from paired observations which differ in multiple unknown concepts, but not all. With this key identifiability result, we show that we can steer single concepts with only this weak form of supervision. Finally, we empirically demonstrate identifiable concept recovery across multiple real-world language datasets by disentangling activations from different LLMs.
    
\end{abstract}

\section{Introduction}
 As increasingly powerful large language models (LLMs) are deployed and widely used, the need to interpret and steer their behavior grows. For both interpretability and steering, we require techniques to disentangle LLM activations into semantically meaningful, and ideally, manipulable concepts.
A large class of LLM interpretability methods rely on supervision from ground truth concepts \citep{koh2020conceptbottleneck}, paired prompts \citep{turner2024steeringlanguagemodelsactivation}, target LLM completions \citep{subramani2022extracting} and abstract causal models of behavior \citep{geiger2024DAS} to map activations to concepts.
For example, using contrastive pairs of prompts that differ by a single concept, recent papers have found vectors in activation space that encode sycophancy \citep{rimsky2024steering}, truthfulness \citep{park2025steer}, and refusal \citep{arditi2024refusallanguagemodelsmediated}.
However, acquiring such supervision is costly, motivating unsupervised methods for concept learning.

Sparse autoencoders (SAEs) have emerged as a popular approach to unsupervised LLM interpretability \citep{cunningham2023sparseautoencodershighlyinterpretable}. Taking inspiration from sparse dictionary learning, SAEs encode LLM activations in a sparse and overcomplete representation. While we might hope that there is a one-to-one correspondence between the learned dimensions and interpretable concepts, \citep{wu2025axbenchsteeringllmssimple, Menon2024AnalyzingO} show empirical evidence that SAEs significantly underperform supervised methods, suggesting that they may not be \emph{identifiable}: that is, they could learn latent dimensions that entangle interpretable concepts.
Consequently, if we use SAEs to extract activation directions to steer LLM behavior, non-identifiability could result in changes to unrelated properties.

In this paper, we propose Sparse Shift Autoencoders ({\isae}s), models for provably recovering steering vectors without the need for concept labels, contrastive pairs and other supervision signals about concepts. Crucially, {\isae}s learn from sparse multi-concept shifts: paired samples in which multiple unknown concepts differ, but not all of them. Such samples are cheap to obtain, for example, by pairing sentences from Wikipedia articles, or by using LLMs to synthetically generate contrastive texts.  Briefly, an \isae maps embedding differences between samples in a pair to a latent space that reflects the concept changes and uses a linear decoding function to reconstruct the difference vector. This architecture reflects the \textit{linear representation hypothesis} \citep{mikolov-etal-2013-linguistic, jiang2024originslinearrepresentationslarge} in assuming that concepts are linearly encoded by LLMs. Crucially, we regularize the latent representation to be sparse, meaning that each shift is modelled using as few concept changes as necessary.
We then leverage the results developed by \citet{lachapelle2023synergies} and \citet{xu2024sparsityprinciplepartiallyobservable} to prove that the proposed \isae approach identifies \emph{some} concepts, under suitable assumptions on distribution that generated the data. We also show how this allow to extract extract valid steering vectors, i.e. direction in the LLM representation that changes a single concept.
We study the \isae empirically on challenging language datasets and models, finding many settings where they outperform SAEs as well as other related steering methods that require supervision.

In sum, this work: 1) formalizes the problem of recovering interpretable concepts from sparse multi-concept shifts, from the lens of identifiability; 2) proposes the \isae to model these sparse multi-concept shifts and establishes identifiability guarantees for these models based on sparsity regularization; 3) using multiple real-world language datasets and LLMs, empirically verifies the identifiability result and demonstrates the benefits of an identifiable model for accurately predicting target steered embeddings.

\begin{tcolorbox}[
  enhanced jigsaw,
  breakable,
  colback=gray!6,
  colframe=gray!60!black,
  title=\textsc{What is identifiability?},
  boxrule=0.5pt,
  arc=2pt,
  left=4pt, right=4pt, top=2pt, bottom=2pt
]
A representation learning method is \emph{identifiable} if, given sufficient data, it provably recovers the true underlying factors up to benign indeterminacies like permutation and rescaling. Without identifiability, learned features may conflate multiple concepts or capture spurious directions. Moreover, non-identifiable methods can yield different solutions across training runs, complicating reproducibility. Identifiability ensures the learned representation reflects genuine invariant structure learned from the data, thereby enabling meaningful downstream interpretation.
\end{tcolorbox}

\section{Problem formulation}
\label{sec:prob_form}

We observe texts $\x \in \X \subseteq \mathbb{R}^{d_x}$ that are generated from underlying \emph{concept representations} $\c \in \C  \subseteq \mathbb{R}^{d_c}$ through an unknown generative process $g: \C \to \X$ so that $\x=g(\c)$.
While we cannot observe the concept representation $\c$ of an observation $\x$, we have access to \emph{learned representations} $\z=f(\x)$, where the function $f: \X \rightarrow \Z \subseteq \mathbb{R}^{d_z}$ maps observations $\x$ to $d_z$-dimensional real vectors $\z \in \Z$, known as their \emph{embeddings}.
Throughout this paper, we consider the case where $f(\x)$ comes from an autoregressive language model and is the embedding of the final token $\x_T$ in the residual stream after the final layer. 

We assume that the concepts $\c$ are \textit{encoded} in the representations $\z$ through the unknown composite function $\z = f(g(\c))$. We consider concept perturbations,
\begin{align}
    \tilde \c \coloneqq \c + \deltac; \quad \deltac_k = \lambda \mathbf{e}_k,
\end{align}
where $\deltac$ is called the \textbf{concept shift} vector, $\lambda$ is the magnitude of the perturbation, and $\deltac_k \neq 0$ for all perturbed concepts $k$.

\textbf{Main goal.} We want to map unlabelled concept shifts $\deltac$ to their corresponding vectors in the space of LLM activations. (Refer to \cref{apx:steering} for a formal treatment of steering.). 

\begin{figure}
    \centering
    \includegraphics[width=\linewidth]{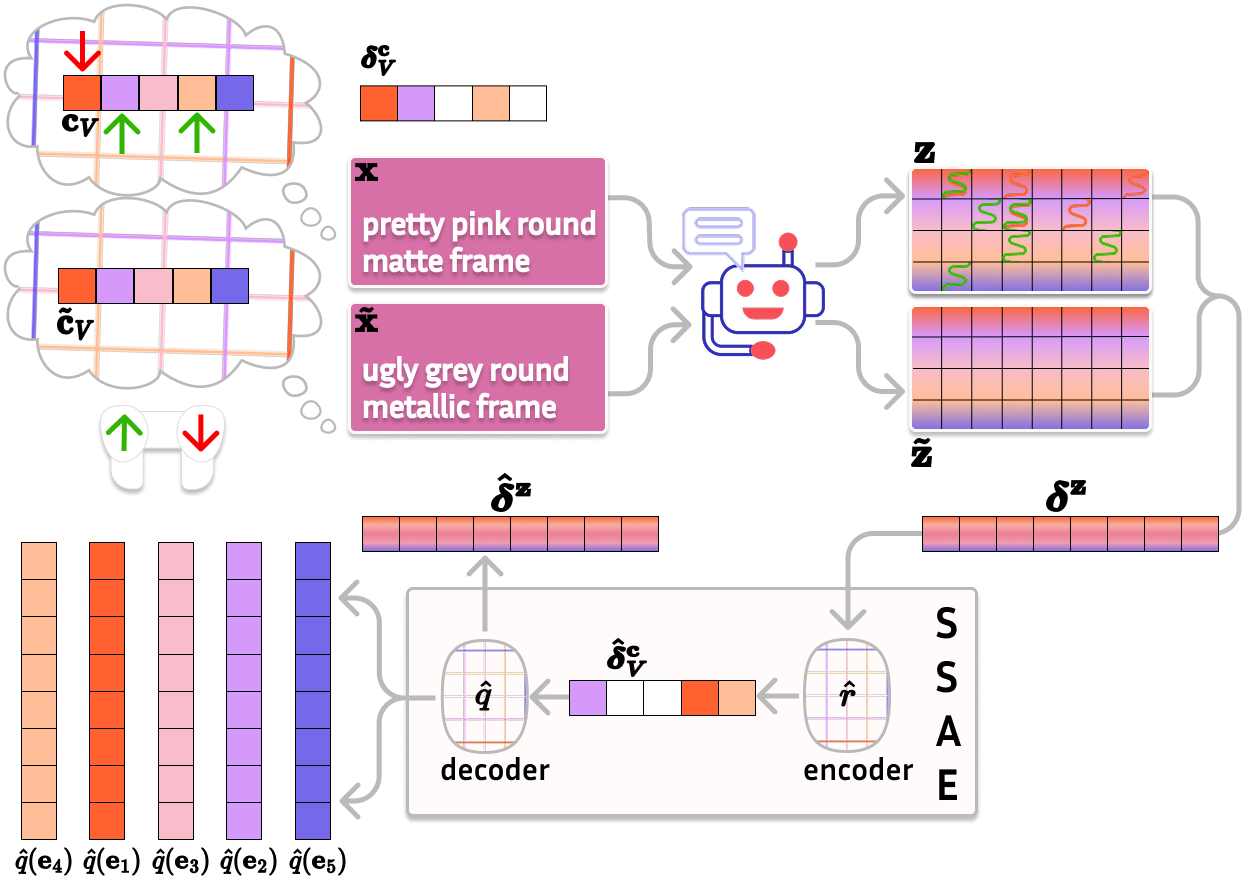}
    \caption{{\isae}s map multi-concept shifts in embedding space to concept shifts, leveraging the latter's sparsity, thereby recovering steering vectors. The learnt steering vectors are identified up to permutation and scaling.}
    \label{fig:main}
\end{figure}

The key challenge is that we only observe texts ($\x$) and their embeddings ($\z$), and thus, we cannot directly learn a mapping from concepts shifts $\deltac$ to LLM activation shifts.
A naive unsupervised approach is to fit an autoencoder to LLM embeddings $\z$ so that for any input, we can encode it in a latent space, implement the desired concept shift $\deltac$ in that space, and decode it to obtain a perturbed embedding $\tilde \z$. However, unless the autoencoder is guaranteed to encode embeddings $\z$ in a latent space that captures concepts, this naive approach will result in perturbations $\tilde \z$ that \emph{do not reflect the desired concept shifts}. Unfortunately, unconstrained autoencoding objectives are non-identifiable \citep{HYVARINEN1999429}, and sparse autoencoding objectives \citep{cunningham2023sparseautoencodershighlyinterpretable} may not be able to invert embeddings to potentially billions of concepts. As such, there is no guarantee that such approaches recover latent concepts from observed embeddings $\z$, posing a risk for steering. 

\textbf{Key idea.} We develop an identifiable autoencoding method called sparse shift encoders ({\isae}s). The key idea behind {\isae}s are multi-concept shift data, illustrated in \cref{fig:main}. As an example, consider two text snippets $\x:$ \texttt{pretty pink round matte frame} and $\tilde \x: $ \texttt{ugly grey round metallic frame}. Both $\x$ and $\tilde \x$ seem to be encoded by $5$ concepts each-- a descriptive adjective (pretty/ugly), colour (pink/grey), shape (round), texture (matte/metallic), and object (frame). However, when we consider what has changed from $\x$ to $\tilde \x$, it's a smaller set of ($3$) concepts, and it is also possible to imagine pairs which vary by just a single concept. An \isae provably recovers these inter-sample concept shifts by regularizing the inferred concept shifts $\hatdeltac$ to be sparse. 

\section{Sparse Shift Autoencoders ({\isae}s)}
\label{sec:dgp} 
We start by describing the data-generating process and the set of concepts learnable via inter-sample differences, before proposing a method for learning steering vectors for these concepts. Following \citet{locatello2020weaklysupervised}, we consider paired observations $(\x, \tilde\x)$ assumed to be sampled from the following generative process:
\begin{align}
    &S \sim p(S), \quad (\c, \tilde\c) \sim p(\c, \tilde\c \mid S), \label{eqn:dgp}\\
    &\x \coloneqq g(\c), \quad \tilde\x \coloneqq g(\tilde\c) \,,
\label{eq:dgp}
\end{align}
where $S \subseteq \{1, \dots, d_c\}$ denotes the subset of concepts that vary between $\x$ and $\tilde\x$.
More precisely, $p(\c, \tilde\c \mid S)$ is such that, with probability one, $\c_k = \tilde\c_k$ for all $k \not\in S$. Crucially, across each pair of observations, an unknown set of concepts changes.
% , allowing us to leverage paired data that is cheap to obtain, such as posts and their replies on social media sites. 
For what follows, it will be useful to define $V \subseteq \{1, \dots, d_c\}$ to be the set of \textit{varying concepts}:
\begin{align}
V \;\coloneqq\; \bigcup\nolimits_{\smash{S:\,p(S)>0}} S \,. \label{defn:varying_concepts}
\end{align}
The set $V$ thus contains the concepts that can change in a pair $(\x, \tilde\x)$. Even though concepts outside $V$ are assumed to remain fixed \textit{within} a pair $(\x, \tilde\x)$, they can still vary \emph{across} pairs. Without loss of generality, assume that ${V \coloneqq \{1, .\dots, |V|\}}$. 

% Next, we require a function that encodes the variations between $(\z, \tilde \z)$ into the variations $\deltac$ between $(\c, \tilde \c)$. For this, we define the 

Next, we consider difference vectors $\deltaz \coloneqq f(\tilde\x) - f(\x) = \tilde\z - \z$. These vectors capture how underlying concept differences between a pair of inputs $\x$ and $\tilde\x$ are represented in the space of LLM embeddings.  
An important assumption made by \citep{rajendran2024learning,park2023linear} helps us relate these difference vectors to concept shifts:
% and state an important assumption to explain why it would be useful to consider them:

\begin{assumption}[Linear representation hypothesis (LRH)]
\label{ass:lrh}
The generative process $g: \C \to \X$ and the learned encoding function $f: \X \to \Z$ are such that  $f \circ g : \C \to \Z$ is linear, implying there exists a $d_z \times d_c$ real matrix $\A$ such that:
    \begin{align}
        \label{eq:linear_entangle}
        \z = f(g(\c)) = \A \c \ .
        %\deltaz = f(g(\deltac)) = \A \deltac,
    \end{align}
\end{assumption}
Put simply, the LRH says that the learned representation $\z$ \emph{linearly encodes concepts}. Consequently, difference vectors $\deltaz$ are also linearly related to concept shifts so that $\deltaz = \A \deltac$.

A long line of work provides evidence for this hypothesis (c.f. \citet{rumelhart1973model, hinton1986learning, mikolov-etal-2013-linguistic, ravfogel2020null}). More recently, theoretical work justifies why linear properties could arise in these models (c.f. \citet{ jiang2024originslinearrepresentationslarge, roeder2021linear, marconato2024all}). \Cref{sec:rel} provides a full list of related work, while \cref{apx:lrh} provides an explanation of the equivalence between LRH's different interpretations. 

Sparse Shift Autoencoders ({\isae}s) take as input the observed difference vectors $\deltaz_V$ and model them with an affine encoder $r:\mathbb{R}^{d_z} \rightarrow \mathbb{R}^{|V|}$ and an affine decoder $q:\mathbb{R}^{|V|} \rightarrow \mathbb{R}^{d_z}$ such that,
\begin{flalign}
% \begin{split} \nonumber
     \hatdeltac_V \coloneqq {r}(\deltaz) & \coloneqq \mathbf{W}_e (\deltaz - \mathbf{b}_d) + \mathbf{b}_e\,;\\
    \hatdeltaz \coloneqq {q}(\hatdeltac_V) & \coloneqq \mathbf{W}_d \hatdeltac_V + \mathbf{b}_d\,. 
% \end{split}
\label{eqn:ssae}
\end{flalign}
The representation $r(\deltaz)$ predicts $\deltac_V$, i.e., the concept shifts corresponding to $\deltaz$, with $\deltac_V = (\deltac_i)_{i\in V}$ the subvector of $\deltac$ corresponding to the index set $V$. That is, {\isae}s map differences in embedding space to their constituent concept shifts, focusing \textit{only on the varying concepts}.

We train {\isae}s to solve the following constrained problem:
\begin{align}
    (\hat{r},\hat{q
    }) \in \arg \min_{r,q} \mathbb{E}_{\x,\tilde\x} \left[ ||\deltaz - q(r(\deltaz))||^2_2\right]\ \label{eqn:recon}\\
    \text{s.t.}\ \ \mathbb{E}_{\x, \tilde\x} || r(\deltaz) ||_0 \leq \beta \label{eqn:sparse_constraint}\,,
\end{align}
where \cref{eqn:recon} is the standard auto-encoding loss that encourages good reconstruction and \cref{eqn:sparse_constraint} is a regularizer that encourages the predicted concept shift vector $\hatdeltac_V \coloneqq \hat r(\deltaz)$ to be sparse. Since the $\ell_0$-norm is non-differentiable, in practice we replace it by an $\ell_1$-norm leading to the following relaxed sparsity constraint:
\begin{flalign}
    \mathbb{E}_{\x, \tilde\x} || r(\deltaz)||_1 \leq \beta\,. %\implies \frac{1}{|V|N}\sum_{n=1}^{N} || \hatdeltac_V||_1 \leq \beta. 
    % \quad \textcolor{applegreen}{\text{(sparsity)}}
    \label{eqn:sparse_constraint_l1}
    %\label{eqn:model_sparsity}
    \end{flalign}
We then approximately solve this constrained problem by finding a saddle point of its Lagrangian using the ExtraAdam algorithm \citep{gidel2020variationalinequalityperspectivegenerative} as implemented by \citet{gallegoPosada2022cooper}. \cref{app:sparseopt} provides a detailed discussion of the benefits of constraints as opposed to penalty to regularize objectives. Appropriate normalization is crucial for enforcing sparsity using the $\ell_1$-norm. Further details, including other implementation aspects, are discussed in \cref{sec:emp} and \cref{apx:imp}. 

\parhead{Identifiability of {\isae}s.}  In \cref{sec:wscrl} we will show that, under suitable assumptions on the data-generating process and a suitable choice of $\beta$, the $\ell_0$-regularized problem of \cref{eqn:recon,eqn:sparse_constraint} is guaranteed to learn a $(\hat r, \hat q)$ such that $\hat r (\deltaz) = \mathbf{P}\D\deltac_V$ where $\D$ is an invertible diagonal matrix, $\mathbf{P}$ is a permutation matrix. In other words, the learned representation $\hat r(\deltaz)$ can be related to the ground-truth concept shift vector $\deltac_V$ (considering only the varying concepts $V$) via a permutation-scaling matrix. We will later see how sparsity regularization is crucial for this to happen. Although our theoretical analysis assumes the learned representation has size $|V|$, we find in \cref{apx:v} that, in practice, our method maintains a reasonable degree of identifiability when the representation size is larger than $|V|$. Linking identifiability back to steering, we conclude by showing how the identifiability guarantee implies that $\hat q(\mathbf{e}_k) \in \mathbb{R}^{d_z}$ are valid steering vectors for concepts in $V$.

\section{Identifiability analysis}
\label{sec:wscrl}
This section explains why we expect the representation learned in \cref{eqn:recon} to identify the ground-truth concept shift vector $\deltac_V$ up to permutation and rescaling. To do so, we first demonstrate that, under suitable assumptions, the learned representation $\hat{r}(\deltaz)$ identifies the ground-truth concept shift $\deltac_V$ \textit{up to an invertible linear transformation} when we do not use sparsity regularization. Second, we show that by adding sparsity regularization, the learned representation identifies $\deltac_V$ \textit{up to permutation and element-wise rescaling}.

Recall that, since we expect $d_c \gg d_z$, we cannot assume $\A$ to be injective; the same issue that arises when trying to encode $\c$ from $\z$. Fortunately, we do not need to make this assumption, thanks to the following decomposition. Let $\bar{V} \coloneqq [d_c] \setminus V$ be the complement of $V$. Then:
\begin{align}
     \deltaz &= \A\deltac 
     = \A_V\, \deltac_V  + \A_{\thickoverline{V}}\, \deltac_{\thickoverline{V}}\nonumber\\
     &= \A_V\, \deltac_V \ ,\label{eqn:diff_model}
\end{align}
where we used the fact that $\deltac_{\bar{V}} = 0$, by definition of $V$. By considering difference vectors, we focus on disentangling \emph{only} the varying concepts, the linear entanglement of which the submatrix $\A_V$ captures. Since $|V| \leq d_c$, we can make the assumption that mixing function $\A_V$ is injective. 
% \Cref{eqn:diff_model} can be thought of as a generative process for the \textit{difference vectors} $\deltaz$, as opposed to $\z$, as in \cref{eq:linear_entangle}. 
% A key advantage of \cref{eqn:diff_model} versus \cref{eq:linear_entangle} is that the matrix involved in it has potentially fewer columns, since $|V| \leq d_c$. This suggests that, instead of assuming $\A$ is injective, we assume its submatrix $\A_V$ is injective.
\begin{assumption}\label{ass:injective_Av}
    The matrix $\A_V \in \mathbb{R}^{d_z \times |V|}$ is injective.
    \label{ass:inj}
\end{assumption}
Note that this implies that $d_z \geq |V|$, i.e., $\z$ has at least as many dimensions as there are varying concepts. This is feasible given that $d_z$ is typically around $10^3$ (e.g., in LLMs), supporting a large set of varying concepts $V$. 

To prove linear identifiability, we will need one more assumption. Let $\Delta_V^c$ be the support of the random vector $\deltac_V$. We will require that this support is diverse enough so that its linear span is equal to the whole space $\sR^{|V|}$.
\begin{assumption}\label{ass:suff_var}
    $\text{span}(\Delta_V^c) = \sR^{|V|}$.
\end{assumption}
With these assumptions, we can show linear identifiability by reusing proof strategies that are now common in the literature on identifiable representation learning \citep{iVAEkhemakhem20a,roeder2021linear,ahuja2022weakly,xu2024sparsityprinciplepartiallyobservable}.

\begin{restatable}[\textbf{Linear identifiability}]{proposition}{linearIdent}\label{prop:linear_ident}
    Suppose $(\hat r, \hat q)$ is a solution to the unconstrained problem of \cref{eqn:recon}. Under \cref{ass:lrh,ass:injective_Av,ass:suff_var}, there  exists an invertible matrix $\mathbf{L} \in \sR^{|V|\times|V|}$ such that $\hat q = \A_V\mathbf{L}$ and $\hat{r}(\z) = \mathbf{L}^{-1}\A_V^+\z$ for all $\z \in \textnormal{Im}(\A_V)$, where $\textnormal{Im}(\A_V)$ is the image of $\A_V$.\footnote{We might not have $\hat{r}(\z) = \mathbf{L}\A_V^+\z$ for $\z \not\in \text{Im}(\A_V)$, since the behavior of $\hat{r}$ is unconstrained by the objective outside the support of $\deltaz$, i.e., outside $\text{Im}(\A_V)$.}
\end{restatable}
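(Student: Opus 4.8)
The plan is to exploit two facts: the reconstruction objective in \cref{eqn:recon} has global optimum equal to zero, and the sufficient-variability assumption forces any zero-error solution to be an exact autoencoder on the image of $\A_V$. First I would exhibit an explicit minimizer achieving loss $0$: take the linear maps $r = \A_V^+$ and $q = \A_V$. Since $\A_V$ is injective (\cref{ass:injective_Av}), we have $\A_V^+\A_V = \mathbf{I}_{|V|}$, so on the model of \cref{eqn:diff_model}, $q(r(\deltaz)) = \A_V\A_V^+\A_V\deltac_V = \A_V\deltac_V = \deltaz$. These are affine (indeed linear) and hence admissible. Because the squared-error integrand is nonnegative and this candidate attains $0$, any minimizer $(\hat r,\hat q)$ must also attain $0$, which means $\hat q(\hat r(\deltaz)) = \deltaz$ for $\deltaz$ in the support of the random variable, i.e. for all $\deltaz \in \A_V(\Delta_V^c)$.

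Next I would extend this pointwise identity from the support to the whole subspace $\mathrm{Im}(\A_V)$. By \cref{ass:suff_var}, $\mathrm{span}(\Delta_V^c) = \R^{|V|}$, and since $\A_V$ is linear and injective the image set $\A_V(\Delta_V^c)$ spans the $|V|$-dimensional space $\mathrm{Im}(\A_V)$. The composite $\hat q \circ \hat r$ is affine and agrees with the identity map (also affine) on this spanning set, so the two coincide on all of $\mathrm{Im}(\A_V)$; that is, $\hat q(\hat r(\z)) = \z$ for every $\z \in \mathrm{Im}(\A_V)$. The one technicality here is matching the affine offsets of the two maps, which strictly requires the support to span $\mathrm{Im}(\A_V)$ affinely; this is immediate when $\deltac_V = \vzero$ occurs with positive probability (no concept changes in a pair) or when reverse pairs $(\tilde\x,\x)$ are included so that $\Delta_V^c$ is symmetric about the origin, and otherwise one works directly with the linear parts, which is all the statement concerns.

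The remaining step is a rank-and-image comparison that produces $\mathbf{L}$. From $\hat q(\hat r(\z)) = \z$ on $\mathrm{Im}(\A_V)$, the image of $\hat q$ contains the $|V|$-dimensional subspace $\mathrm{Im}(\A_V)$; but $\hat q$ factors through $\R^{|V|}$, so $\mathrm{rank}(\hat q) \le |V|$. Hence $\hat q$ has rank exactly $|V|$, is injective, and satisfies $\mathrm{Im}(\hat q) = \mathrm{Im}(\A_V)$. Two injective linear maps $\R^{|V|} \to \R^{d_z}$ with identical image differ by an invertible change of basis, so setting $\mathbf{L} \coloneqq \A_V^+\hat q \in \R^{|V|\times|V|}$ gives $\hat q = \A_V\mathbf{L}$, with $\mathbf{L}$ invertible because $\hat q$ is injective. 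Substituting into $\hat q(\hat r(\z)) = \z$ and left-multiplying by $\A_V^+$ yields $\mathbf{L}\,\hat r(\z) = \A_V^+\z$ (using $\A_V^+\A_V = \mathbf{I}$), i.e. $\hat r(\z) = \mathbf{L}^{-1}\A_V^+\z$ for all $\z \in \mathrm{Im}(\A_V)$, as claimed.

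I expect the main obstacle to be the second step: carefully promoting the exact-reconstruction identity off the finite/low-measure support to the entire subspace $\mathrm{Im}(\A_V)$, which is precisely where \cref{ass:suff_var} does its work and where the affine-offset bookkeeping must be handled cleanly. The zero-loss attainability and the final image/rank argument are routine linear algebra. I would also note, consistent with the footnote in the statement, that nothing constrains $\hat r$ outside $\mathrm{Im}(\A_V)$, so the conclusion is necessarily restricted to that image.
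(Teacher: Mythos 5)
Your proof is correct and takes essentially the same route as the paper's: exhibit $(r,q)=(\A_V^+,\A_V)$ as an explicit zero-loss solution, invoke \cref{ass:suff_var} to promote the reconstruction identity from the support to all of $\mathrm{Im}(\A_V)$, and extract $\mathbf{L}$ by routine linear algebra (your $\mathbf{L}=\A_V^+\hat q$ is exactly the paper's $\mathbf{L}=(\mathbf{R}\A_V)^{-1}$, obtained there by right-multiplying with an invertible matrix of spanning support points). Your side remark on affine offsets is a legitimate and slightly sharper observation---the paper's proof silently treats $\hat r,\hat q$ as linear maps even though the method section calls them affine, which is precisely the gap your ``affine hull vs.\ linear span'' caveat addresses---but it does not alter the substance of either argument.
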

We prove \cref{prop:linear_ident} in \cref{app:linear_ident}. The result follows naturally from the linear representation hypothesis in \cref{ass:lrh}, but requires \cref{ass:injective_Av,ass:suff_var} for a complete proof. \citet{rajendran2024learning} prove a similar result, showing that linear subspaces of representations that represent concepts are linearly identified from concept-conditional observations.

\parhead{Identifiability up to permutation and rescaling.} To go from identifiability up to linear transformation to identifiability up to permutation and rescaling, we need to make further assumptions. Let $\mathcal{S}$ be the support of the distribution $p(S)$, i.e., $\mathcal{S} \coloneqq \{S \subseteq [d_c] \mid p(S) > 0\}$. The following is based on \citet{lachapelle2023synergies} and \citet{xu2024sparsityprinciplepartiallyobservable}.

\begin{assumption}[Sufficient diversity of multi-concept shifts]
\label{ass:suffsupp}
The following two conditions hold.
\begin{enumerate}
    \item (Sufficient support variability): For every varying concept $k \in V$, we have
    % \vspace{-1mm}
    \begin{flalign}
        \bigcup_{S \in \mathcal{S} | k \notin S} S = V \setminus \{k \} \quad \forall k \in V \,;
        \label{eqn:suffsupp}
    \end{flalign}
    %\vspace{-4mm}
    \item (Distribution $\mathbb{P}_{\deltac_S | S}$ continuous): For all $S \in \mathcal{S}$, the conditional distribution $\mathbb{P}_{\deltac_S | S}$ can be described using a probability density with respect to the Lebesgue measure on $\mathbb{R}^{|S|}$.
\end{enumerate}
\end{assumption}

% \seb{Could be nice to give an example of support $\mathcal{S}$ that satisfies the assumption with concepts that we think makes sense for text. We could assume something like $V \coloneqq \{\texttt{language}, \texttt{sentiment}, \texttt{verbosity}\}$ and assume maybe 
% \begin{align*}
% \mathcal{S} :=\ \{&\{\texttt{language, sentiment}\}, \\
% &\{\texttt{language, verbosity}\}, \\
% &\{\texttt{sentiment, verbosity}\} \} \, , 
% \end{align*}
% and give actual examples of pairs of sentences for these three multi-concept shifts. That would help making the discussion more concrete.}

% Intuitively, without the first assumption, two concepts $k, j \in  V$ may never vary separately across all pairs, making it impossible for the model to disentangle them. \Cref{ass:suffsupp} accommodates a wider range of scenarios, including the potential for statistically dependent concepts, which we study empirically in \cref{sec:emp}. As an example, the special case where only one concept varies within a pair but all concepts vary at least once across all pairs satisfies this more general assumption. On the other end, it rules out the case where all the concepts vary within every pair, i.e. $p(S = V) = 1$.
Without the first assumption, two concepts $k,j\in V$ might always change together, meaning there is no data pair in which only one of them varies independently. Intuitively, this would prevent the model from disentangling them effectively. Importantly, our assumption accommodates a broad range of scenarios. E.g., it is not necessarily violated even in an extreme case where $|V| - 1$ concepts change in each pair. Moreover, it allows for the presence of statistically dependent concepts.
The second criterion ensures the distribution $\mathbb{P}_{\deltac_S | S = s}$ does not concentrate mass on a subset of $\sR^{|S|}$ of Lebesgue measure zero. In \cref{apx:suffsupp}, we provide examples of distributions that meet or fail the assumption.\footnote{See \citet{lachapelle2023synergies} for a strictly weaker but more technical assumption that is also sufficient for \cref{prop:perm_ident}.} %Next, we define the ideal structure of an estimated concept vector, making the notion of sparsity precise. 

% The sufficient variability assumption does not prevent statistical dependencies between concepts, a setting that we explore empirically in \cref{sec:emp}.
% \sj{proper or lower-D subspace? proper needn't be lower D}

%\begin{definition}     \textcolor{applegreen}{(Sparse concept vectors)}. An estimated concept vector is considered to be sparse when it satisfies the following constraint: $\mathbb{E} || \hatdeltac_S ||_0 \leq \mathbb{E} || \deltac_S ||_0$, referred to as the \textbf{sparsity constraint}.
%    \label{eqn:sparsity}
%\end{definition}

We are now ready to state the main identifiability result of this section. We note that its proof relies to a large extent on an existing result by \citet{lachapelle2023synergies}.

\begin{restatable}[\textbf{Identifiability up to permutation}]{proposition}{permIdent}\label{prop:perm_ident}
    Suppose $(\hat r, \hat q)$ is a solution to the constrained problem of \cref{eqn:recon,eqn:sparse_constraint} with $\beta = \mathbb{E}||\deltac_V||_0$. Under \cref{ass:lrh,ass:injective_Av,ass:suff_var,ass:suffsupp}, there  exists an invertible diagonal matrix and a permutation matrix $\mathbf{D}, \mathbf{P} \in \sR^{|V|\times|V|}$ such that $\hat q = \A_V \mathbf{D}\mathbf{P}$ and $\hat{r}(\z) = \mathbf{P}^\top\mathbf{D}^{-1}\A_V^+\z$ for all $\z \in \textnormal{Im}(\A_V)$, where $\textnormal{Im}(\A_V)$ is the image of $\A_V$.
\end{restatable}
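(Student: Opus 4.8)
The plan is to build directly on \cref{prop:linear_ident} and then invoke the sparsity-based identifiability machinery of \citet{lachapelle2023synergies}. By \cref{prop:linear_ident}, any solution $(\hat r, \hat q)$ to the unconstrained problem already satisfies $\hat q = \A_V \mathbf{L}$ and $\hat r(\deltaz) = \mathbf{L}^{-1}\A_V^+ \deltaz$ for some invertible $\mathbf{L} \in \sR^{|V| \times |V|}$. Since adding the sparsity constraint only restricts the feasible set, a constrained solution is in particular an unconstrained solution \emph{provided} the constraint is satisfiable at the optimum, so the representation it learns must be of the form $\hat r(\deltaz) = \mathbf{L}^{-1} \A_V^+ \A_V \deltac_V = \mathbf{L}^{-1}\deltac_V$ (using \cref{ass:injective_Av}, which gives $\A_V^+ \A_V = \mathbf{I}$). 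The whole task therefore reduces to showing that the extra structure $\mathbf{L}^{-1} = \mathbf{D}\mathbf{P}$ for a diagonal $\mathbf{D}$ and permutation $\mathbf{P}$, i.e. that $\mathbf{L}$ itself is a permutation-scaling matrix.

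First I would argue that the choice $\beta = \mathbb{E}\|\deltac_V\|_0$ makes the constraint tight and that the ground-truth representation $\deltac_V$ (corresponding to $\mathbf{L} = \mathbf{I}$) is feasible, so the constrained minimum of the reconstruction loss is still zero and is attained exactly by the affine models whose latent equals $\mathbf{L}^{-1}\deltac_V$ with $\mathbb{E}\|\mathbf{L}^{-1}\deltac_V\|_0 \le \mathbb{E}\|\deltac_V\|_0$. The key inequality to establish is thus that, among all invertible $\mathbf{L}^{-1}$, the expected $\ell_0$-norm $\mathbb{E}\|\mathbf{L}^{-1}\deltac_V\|_0$ is minimized precisely when $\mathbf{L}^{-1}$ is a permutation-scaling matrix, and that this minimum equals $\mathbb{E}\|\deltac_V\|_0$. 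This is exactly the combinatorial content of the sparsity-identifiability theorem, and it is where \cref{ass:suffsupp} enters: the support-variability condition \cref{eqn:suffsupp} guarantees that for each coordinate $k$ there are enough sign patterns in which $k$ is absent to force any density-reducing mixing to strictly increase the expected support, while the continuity condition rules out measure-zero coincidences where a nontrivial linear combination of coordinates vanishes with positive probability.

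Concretely, I would set up the argument as follows. Let $\deltahat \coloneqq \hat r(\deltaz) = \mathbf{L}^{-1}\deltac_V$, so that $\deltac_V = \mathbf{L}\,\deltahat$ and both representations are related by the invertible matrix $\mathbf{L}$. I would then verify that the pair $(\deltahat, \deltac_V)$ meets the hypotheses of the relevant theorem in \citet{lachapelle2023synergies}: the learned latent reconstructs the observations with zero error through an affine (hence in particular the required structural) decoder, it attains the minimal expected $\ell_0$-norm, and the ground-truth shifts satisfy sufficient support variability and conditional continuity via \cref{ass:suffsupp}. Applying that theorem yields that $\mathbf{L}$ must be a product $\mathbf{D}\mathbf{P}$ of a diagonal invertible matrix and a permutation matrix. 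Substituting back, $\hat q = \A_V \mathbf{L} = \A_V \mathbf{D}\mathbf{P}$ and $\hat r(\z) = \mathbf{L}^{-1}\A_V^+ \z = \mathbf{P}^\top \mathbf{D}^{-1}\A_V^+\z$ on $\textnormal{Im}(\A_V)$, which is the claim (after relabelling $\mathbf{D},\mathbf{P}$ as the inverse-transpose factors, both of which remain a diagonal and a permutation matrix respectively).

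The main obstacle is the sparsity-minimization step, i.e. proving that the expected $\ell_0$-norm is strictly increased by any invertible $\mathbf{L}$ that is not permutation-scaling. This is genuinely combinatorial rather than a routine linear-algebra computation, and getting it right requires carefully matching our data-generating process to the exact hypotheses of \citet{lachapelle2023synergies}: one must check that our notion of "varying set'' $S$ and the support distribution $p(S)$ translate into their support-variability assumption, and that the affine (non-injective on all of $\sR^{d_z}$, but injective on $\textnormal{Im}(\A_V)$) decoder is compatible with their framework. I expect the cleanest route is to cite their theorem as a black box after establishing the reduction above, so the real work is the verification that \cref{ass:suffsupp} is precisely their sufficient condition and that the zero-reconstruction-plus-minimal-support characterization transfers; the remaining algebraic substitutions are mechanical.
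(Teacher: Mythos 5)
Your proposal is correct and follows essentially the same route as the paper's own proof: establish that the ground-truth solution $(q^*,r^*)=(\A_V,\A_V^+)$ is feasible with zero loss so every constrained optimum is also an unconstrained one, invoke \cref{prop:linear_ident} to get $\hat q=\A_V\mathbf{L}$, rewrite the constraint as $\mathbb{E}\|\mathbf{L}^{-1}\deltac_V\|_0\le\mathbb{E}\|\deltac_V\|_0$, and apply the sparsity lemma of \citet{lachapelle2023synergies} (which the paper restates as \cref{lemma:synergies_proof}, using \cref{ass:suffsupp} and the measure-zero property of proper linear subspaces) to conclude $\mathbf{L}$ is permutation-scaling. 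The only cosmetic difference is that the paper reproduces the lemma's proof for completeness rather than citing it as a black box.
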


\parhead{Proof sketch.} We outline the proof here and defer the full details to \cref{app:proof}. We first show that all optimal solutions of the constrained problem must reach a reconstruction loss of zero. This means that optimal solutions to the constrained problem are also optimal for the unconstrained one. Thus, these solutions must identify $\A_V$ up to linear transformation, by \cref{prop:linear_ident}. We can then rewrite the constraint as $\mathbb{E} || \mL^{-1}\deltac_V ||_0 \leq \beta = \mathbb{E}||\deltac_V||_0$. Here, we can reuse an argument initially proposed by \citet{lachapelle2023synergies} to leverage this inequality to conclude that $\mL^{-1}$ must be a permutation-scaling matrix. For completeness, we present this argument in \cref{lemma:synergies_proof}. It shows that, applying the matrix $\mL^{-1}$ to $\deltac_V$ always strictly increases its expected sparsity, \textit{unless} $\mL^{-1}$ is a permutation-scaling matrix. Thus, to satisfy the inequality, $\mL$ must be a permutation-scaling matrix.

\parhead{Extracting steering vectors.} Under \cref{ass:lrh,ass:injective_Av,ass:suff_var,ass:suffsupp}, \cref{prop:perm_ident} shows that $\hat q = \A_V \mathbf{DP}$. From this identifiability result, we can see that,
\begin{flalign*}
    \z + \hat q(\mathbf{e}_k) = \A\c + \mathbf{D}_{\pi(k), \pi(k)}\A\mathbf{e}_{\pi(k)} \\
    = \A(\c + \lambda \mathbf{e}_{\pi(k)}) = f(g(\c + \lambda \mathbf{e}_{\pi(k)}))  = f(g(\tilde \c_{\pi(k), \lambda} )) \,,
\end{flalign*}
where $\lambda := D_{\pi(k), \pi(k)}$. In other words, when we add the decoded basis vector $\mathbf{e}_k$ to any embedding $\z$, i.e., add the $k$-column of the linear decoding matrix, the resulting vector represents $f(g(\tilde \c_{\pi(k), \lambda}))$, the embedding representation of the $\pi(k)$-th concept steered. Thus, identifiability directly leads to accurate unsupervised steering. A practitioner can use this result to try each steering vector $q(\mathbf{e}_k)$ in turn, generate tokens with an LLM, and directly interpret the changes to interpret the concept that was steered. By contrast, in \cref{apx:linid_insuff}, we show that linear identifiability is insufficient to recover steering vectors up to permutation without the need for further labelled examples.

The novelty of our contribution is in connecting the theory of identifiable representation learning to steering vector discovery. While standard SAEs employ sparsity as an inductive bias without formal recovery guarantees, we show that sparsity when considered for shifts between concepts, yields provable identifiability. This ensures that decoder columns correspond to valid steering vectors for individual concepts.

\section{Empirical Studies}
\label{sec:emp}
The central challenge in mechanistic interpretability is empirical reliability. Sparse autoencoders (SAEs) discover features with low reconstruction error, but whether these represent stable internal computations or artifacts of a particular training run or optimisation remains unclear---causing steering to fail unpredictably. Our theory argues this failure is structural: without identifiability, steering directions are ill-defined, allowing multiple incompatible explanations to remain equally consistent with the data. We evaluate this claim across controlled benchmarks and real-world language datasets. Our experiments address two questions:
\begin{enumerate}[leftmargin=*, topsep=2pt, itemsep=2pt]
\item \textbf{Theoretical Validity.} Do \isae{}s recover latent directions uniquely (up to permutation and scaling)? (\cref{subsec:validation})
\item \textbf{Practical Consequences.} Does identifiability improve steering accuracy and robustness to distribution shift? (\cref{subsec:utility})
\end{enumerate}

\textbf{Setup.} We extract final-layer embeddings from Gemma-2-2B \citep{anil2024gemma2} and Pythia-70M \citep{biderman2023pythia} on contrastive prompt pairs $(\x, \tilde \x)$ from semi-synthetic data (\cref{tab:datasets}) and real-world datasets: Bias in Bios~\citep{De_Arteaga_2019}, refusal and sycophancy~\citep{panickssery2024steeringllama2contrastive}, and TruthfulQA~\citep{lin2022truthfulqameasuringmodelsmimic} \footnote{We evaluate on existing steering benchmarks with explicit contrastive pairs. In practice, pairs can be constructed by uniformly sampling any two prompts from the dataset.}. 

\begin{tcolorbox}[
  enhanced jigsaw,
  breakable,
  colback=gray!6,
  colframe=gray!60!black,
  title=\textsc{Identifiable Steering Desiderata},
  boxrule=0.5pt,
  arc=2pt,
  left=4pt, right=4pt, top=2pt, bottom=2pt
]
Identifiability ensures that a learned steering direction corresponds to a \emph{unique underlying concept}, up to irreducible symmetries (permutation and rescaling). As a result, steering directions reflect stable structure that can be transferred across settings and we expect the following empirical consequences:
\begin{enumerate}[leftmargin=*, topsep=2pt, itemsep=1pt]
\item On synthetic or semi-synthetic benchmarks with known generative factors, identifiable methods recover ground-truth concepts up to permutation and scaling.
\item Higher identifiability scores correlate with improved downstream steering performance.
\item Reliable out-of-distribution steering requires identifiability. Non-identifiable directions may succeed in-distribution but fail under distribution shift.
\end{enumerate}
\end{tcolorbox}

\begin{figure}[ht]
    \centering
    \includegraphics[width=\linewidth]{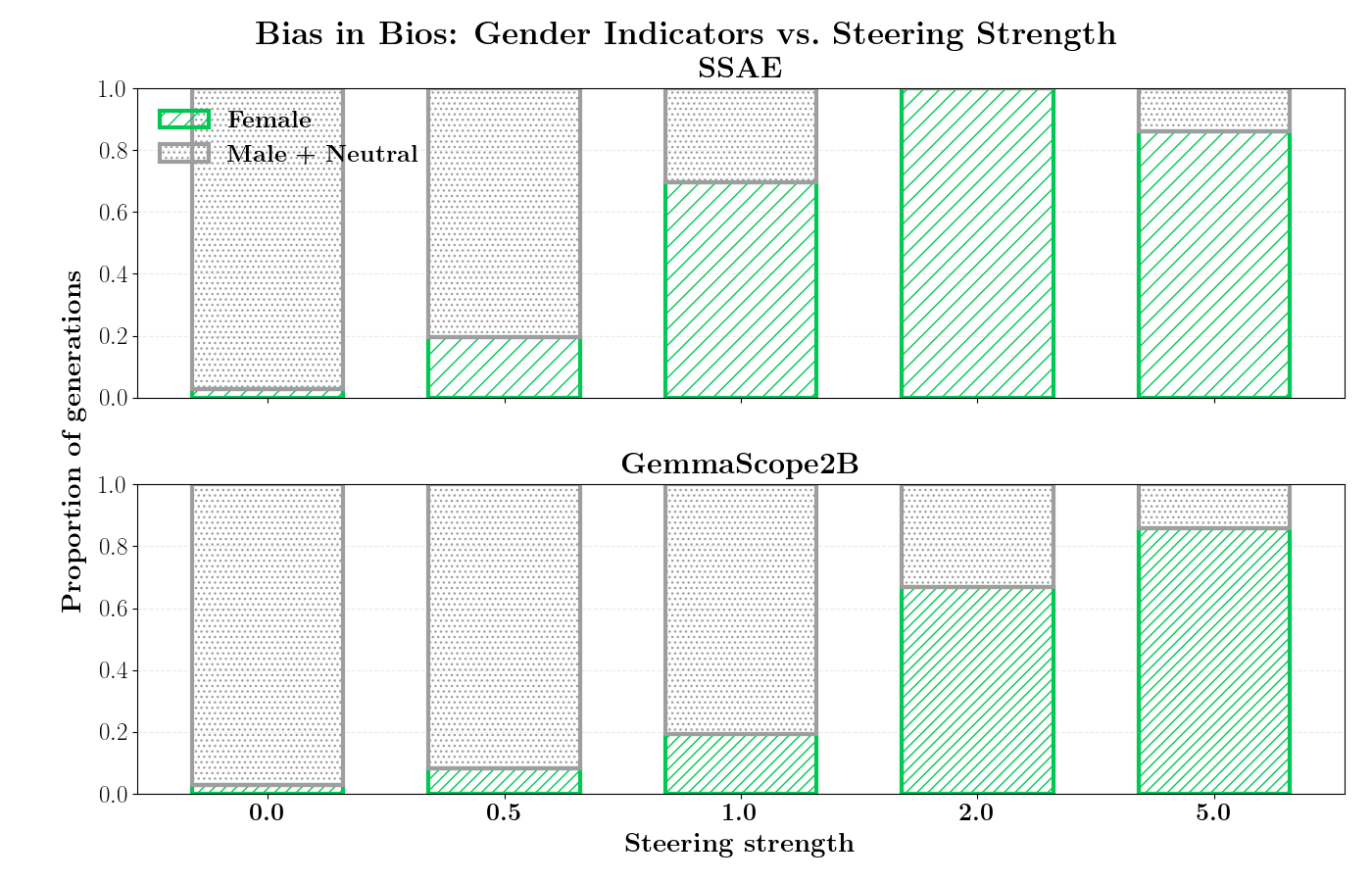}
    \caption{\textbf{{\isae}s achieve earlier and stronger transitions to female indicators in generated text} (by strength $1.0$), while GemmaScope2B requires stronger interventions.}
    \label{fig:bias-stats}
\end{figure}

\textbf{Metrics.} We evaluate two aspects of the learned representations. We measure identifiability via the Mean Correlation Coefficient (MCC)~\citep{hyvarinen2016unsupervisedfeatureextractiontimecontrastive}, which equals 1.0 when learned dimensions aligns perfectly with the concept labels up to permutation and scaling. We measure steering accuracy via cosine similarity between the predicted steering vector, $\hat{q}(\rve_{\pi(k)})$, and the true embedding difference $f(\tilde{\x}_k) - f(\x)$ on held-out pairs differing in a single concept $k$. This enables comparison in a manner invariant to the scale of the strength at which steering vectors are applied to the representation. Details in \cref{apx:metrics}. 
\begin{table*}[ht]
\centering
\caption{SSAEs are consistently more identifiable across datasets as seen by the MCC across unsupervised baselines, with the difference being most pronounced when latent concepts are correlated, as in \textsc{corr}$(2,1)$. MCC scores using Pythia are reported in \cref{tab:mcc_lang_pythia}.
}
\label{tab:mcc_lang_gemma}
\small
\setlength{\tabcolsep}{6pt}
\renewcommand{\arraystretch}{1.05}
\begin{tabular}{l|c|c|c|c|c|c}
\toprule
\textbf{Dataset} 
& \textbf{SSAE} 
& \textbf{GemmaScope} 
& \textbf{TopK-SAE} 
& \textbf{ReLU-SAE} 
& \textbf{JumpReLU SAE}
& \textbf{Linear Probe} \\
\midrule
\textsc{lang}$(1,1)$     
& $0.9121 \pm 0.0180$ 
& $0.8614$ 
& $0.8467 \pm 0.0310$ 
& $0.8325 \pm 0.0280$ 
& $0.8219 \pm 0.0340$ 
& $0.8793$ \\
\hdashline
\textsc{gender}$(1,1)$   
& $0.9018 \pm 0.0210$ 
& $0.8542$ 
& $0.8421 \pm 0.0290$ 
& $0.8297 \pm 0.0300$ 
& $0.8183 \pm 0.0360$ 
& $0.8675$ \\
\hdashline
\textsc{binary}$(2,2)$   
& $0.8896 \pm 0.0240$ 
& $0.8387$ 
& $0.8269 \pm 0.0350$ 
& $0.8128 \pm 0.0330$ 
& $0.8015 \pm 0.0410$ 
& $0.8562$ \\
\hdashline
\textsc{corr}$(2,1)$     
& $0.9047 \pm 0.0207$ 
& $0.5760$ 
& $0.5542 \pm 0.0660$  
& $0.5132 \pm 0.0510$ 
& $0.5030 \pm 0.0810$ 
& $0.7746$ \\
\hdashline
TruthfulQA           
& $0.7585 \pm 0.0182$ 
& $0.7370$ 
& $0.7281 \pm 0.0294$ 
& $0.7128 \pm 0.0309$ 
& $0.7128 \pm 0.0309$ 
& $0.7758$ \\
\hdashline
Sycophancy                
& $0.7070 \pm 0.0450$ 
& $0.6344$ 
& $0.5896 \pm 0.0634$ 
& $0.6005 \pm 0.0245$ 
& $0.5177 \pm 0.0080$ 
& $1.0000$ \\
\hdashline
Refusal                  
& $0.6233 \pm 0.0401$ 
& $0.6119$ 
& $0.5686 \pm 0.0511$ 
& $0.6116 \pm 0.0604$ 
& $0.5699 \pm 0.0143$ 
& $0.9000$ \\
\hdashline
Bias-in-Bios              
& $0.8232 \pm 0.2061$ 
& $0.7385$  
& $0.5660 \pm 0.2296$ 
& $0.7657 \pm 0.1065$ 
& $0.7773 \pm 0.0898$ 
& $0.9731$ \\
\bottomrule
\end{tabular}
\end{table*}

\textbf{Baselines.} We compare against GemmaScope \citep{lieberum2024gemma} and PythiaSAE \citep{eleutherai2023saepythia}---SAEs trained on large-scale datasets without identifiability guarantees---as well as other commonly used SAE variants that can be trained from scratch. These include ReLU SAEs \citep{cunningham2023sparseautoencodershighlyinterpretable, bricken2023monosemanticity}, JumpReLU SAEs \citep{rajamanoharan2024improvingdictionarylearninggated}, and TopK SAEs \citep{gao2024scaling}. We also include linear probes as a supervised baseline with direct label access. We implement \isae{}s (\cref{eqn:ssae}) with the sparsity constraint (\cref{eqn:sparse_constraint_l1}) using the \texttt{cooper} library \citep{gallegoPosada2022cooper}. Details in \cref{apx:imp}. Reconstruction errors across all runs and datasets are reported in \cref{apx:recon}. 
\begin{figure}[t]
\centering
\includegraphics[width=\linewidth]{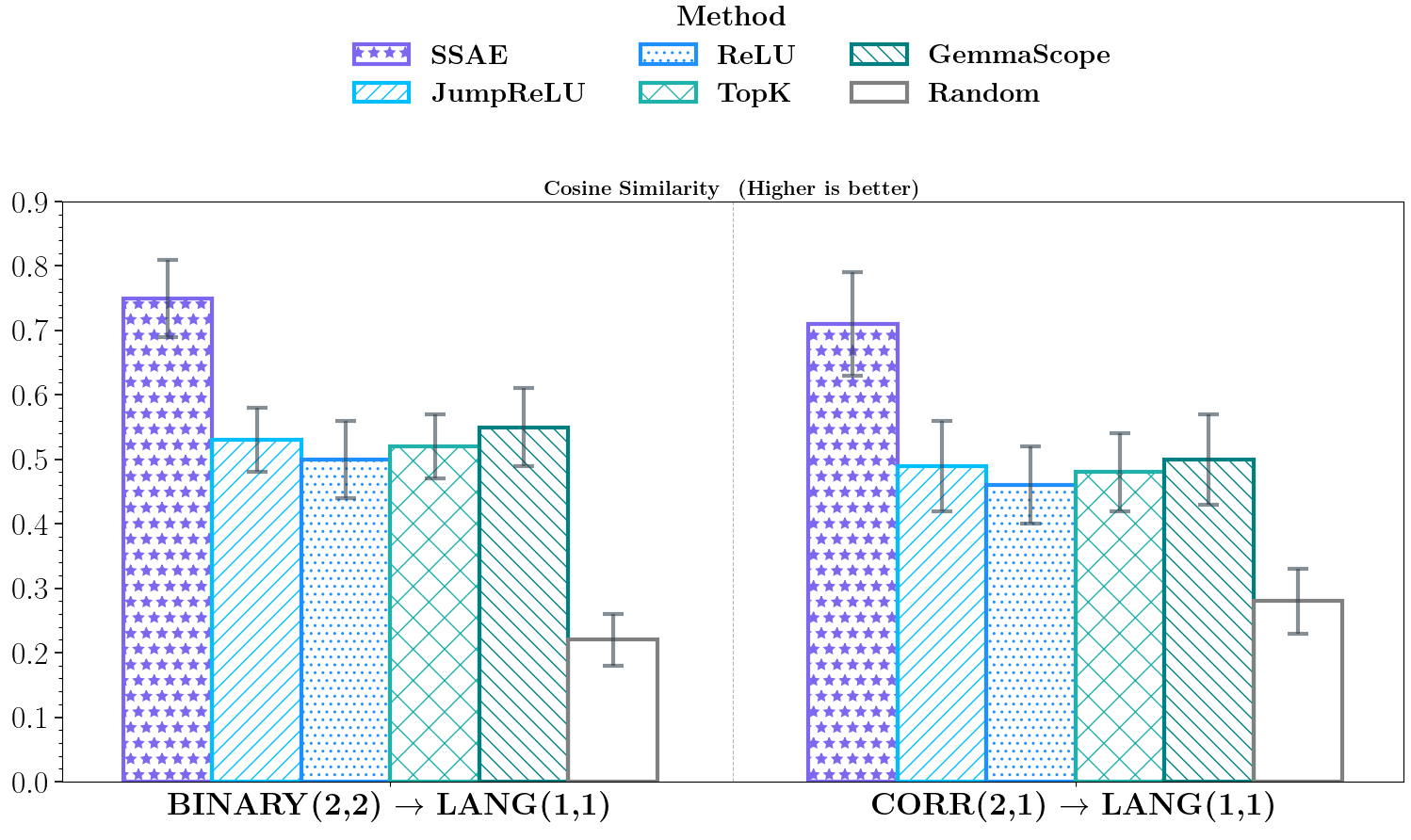}
        \caption{Embeddings steered using \isae show \textbf{higher OOD generalisation performance}.}
        \label{fig:cos_ood}
\end{figure}
\subsection{How well does \isae identify steering vectors?}
\label{subsec:validation}
We first test whether \isae{}s recover concepts up to permutation and rescaling as predicted by \cref{prop:perm_ident}, and how the guarantees hold up against models that were not designed to be identifiable. First, we we consider simple semi-synthetic datasets with single words where we can assume the number of underlying concept variations in pairs $(\x, \tilde \x)$ with a diverse range of concept variations, named as: identifier of the dataset indicating why we consider it, followed by $|V|$ and max$(|S|)$: \textsc{identifier}($|V|$, max$|S|$). Details on datasets can be found in \cref{apx:data}. Briefly, \textsc{lang}($1, 1$) (e.g., \textit{eng} $\rightarrow$ \textit{french}) and
\textsc{gender}($1,1$) (e.g., \textit{masculine} $\rightarrow$ \textit{feminine} vary a single concept between $\x$ and $\tilde \x$. Next, we stress-test the viability of our assumptions on the real-world datasets mentioned earlier. For details on all datasets and $(\x, \tilde \x)$ pair creation, please refer to \cref{apx:data}.

\cref{tab:mcc_lang_gemma} shows that \isae achieves consistently high MCC values, empirically corroborating \cref{prop:perm_ident}. We also include a similar analysis in \cref{apx:empirical-pythia} with \isae{}s and all baselines trained on Pythia embeddings, revealing a similar pattern. In particular, the guarantee of \isae{} to identify concepts, even when they are correlated, enables the identifiability of steering vectors on the dataset \textsc{corr}$(2,1)$. Next, we evaluate whether the benefits of a higher $\text{MCC}$ translate to performance improvements on steering embeddings to be more similar to those of the target concept.

\begin{figure*}
\centering
\includegraphics[width=\linewidth]{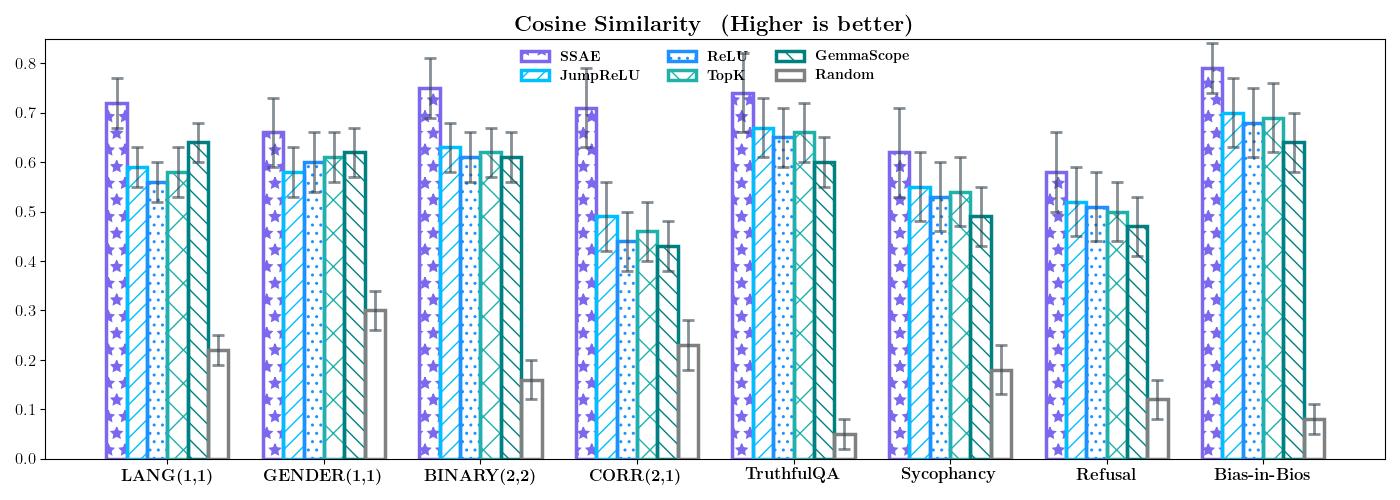}
        \caption{\textbf{A higher MCC value is associated with a greater cosine similarity.} Embeddings steered with vectors from a more disentangled decoder align more closely to target embeddings.}
        \label{fig:cos}
\end{figure*}

\subsection{Practical Implications of Identifiability for Steering}
\label{subsec:utility}
% \textbf{Does identifiability translate to \textit{better} steering?}  
% % Specifically, for the experiments in \cref{fig:cos}, 
We hold out pairs $(\x, \tilde \x_k) \forall k \in V$, each varying by a single concept, and compare the cosine similarity between the steered embeddings and target embeddings. 
\cref{fig:cos} illustrates that {\isae}'s higher $\text{MCC}$  performance generally translates to more accurate steering, with significant advantages over all related methods in the more challenging \textsc{binary}($2, 2$) and \textsc{corr}($2, 1$) settings where multiple or correlated concepts change. \cref{fig:cos} also reveals that even slight differences in $\text{MCC}$ values can translate into pronounced variations in steering accuracy. Next, we evaluate \emph{out-of-distribution} (OOD) steering accuracy, based on the hypothesis that steering vectors that disentangle a single concept should transfer to different domains.

\textbf{OOD generalisation.} For this evaluation, we learn a steering vector from \textit{eng} $\rightarrow$ \textit{french} using the \textsc{binary}($2, 2$) or \textsc{corr}($2, 1$) dataset, where language changes are shown for occupation-related works, and use the steering vector on the \textsc{lang}($1, 1$) dataset consisting of words related to household objects.  
\cref{fig:cos_ood} shows that the steering vectors learned by {\isae}s transfer effectively to OOD datasets while SAEs do not perform better than simple baselines, further substantiating the importance of identifiability for unsupervised steering.

\textbf{Encoding dimension.} While identifiability theory requires assuming that the encoder dimension is known (equal to $|V|$, \cref{eqn:ssae}), in all the experiments with language models, we've considered the encoding dimension to be as large as the embedding dimension, since it's difficult to assume a known number of concepts in real-world data comprising of sentences and paragraphs. Another design choice is to use the last layer's embeddings for steering since the LRH--a key component of the identifiability results---is better theoretically motivated (c.f. \citep{marconato2024all}) at the last layer. To test sensitivity to these assumptions, we conducted further studies training {\isae}s on different layers of an LLM (\cref{apx:layer}), and studying the variation of $\text{MCC}$ versus steering accuracy as we increase the encoding dimension size past $|V|$, finding that for encoding dimensions $>|V|$, there is an increase in steering accuracy even though $\text{MCC}$ values drop substantially (see \cref{apx:v} for details). These promising findings provide grounding for larger encoding dimensions (than $|V|$) as considered for the experiments in the paper. 

\textbf{Steering and open-ended generation.}  We evaluate steering capabilities using
   the Bias in Bios dataset \citep{De_Arteaga_2019}, which contains biographical
  text annotated with gender and profession. This dataset allows us to
  systematically measure the effects of steering by counting gender pronouns
  in generated text. We identify the decoder
  column with maximum correlation with the binary gender labels across
  the dataset. We use this decoder
  column directly as the steering vector. During text generation with Gemma-2B, we apply
   a forward hook at the last layer (25) that adds the steering vector (scaled by a
  steering strength coefficient) to the last token's hidden state before
  continuing generation. We generate text from prompts associated with
  male-biased professions (e.g., ``The CEO of the tech startup
  announced...") and count gendered pronouns in the output, classifying each
   generation as male-dominated, female-dominated, or neutral.
  \cref{fig:bias-stats} shows that {\isae} steering vectors lead to more
  effective generation of female pronouns than those of GemmaScope,
  indicating that {\isae}s learn decoder columns better aligned with the
  underlying gender concept.

\noindent
\textbf{Limitations.} We do not claim that \isae{}s universally outperform SAEs. Rather, \isae{}s make explicit when steering directions are identifiable, and we provide preliminary empirical evidence supporting this claim. Crucially, \isae{}s are trained on the same underlying datasets used for SAEs and do not require specialised or manually constructed paired samples in practice; they instead exploit naturally occurring variation by operating on differences between samples. We omit large-scale benchmarks and dashboards by design since our contribution is theoretical clarity.

\section{Related work}
\label{sec:rel}
% \vspace{-0.1cm}
\textbf{Linear representation hypothesis}. This paper builds on the linear representation hypothesis that language models encode concepts linearly. Several papers provide empirical evidence for this hypothesis \citep{mikolov-etal-2013-linguistic, gittens-etal-2017-skip, ethayarajh2019understandinglinearwordanalogies, allen2019analogiesexplainedunderstandingword, seonwoo-etal-2019-additive, burns2024discoveringlatentknowledgelanguage, li2024inferencetimeinterventionelicitingtruthful, moschella2023relativerepresentationsenablezeroshot, tigges2023linearrepresentationssentimentlarge, nissim2019fairbettersensationalmandoctor, ravfogel-etal-2020-null,park2023linear, park2024geometrycategoricalhierarchicalconcepts}. Recent work also provides theoretical justification for why linear properties might consistently emerge across models that perform next-token prediction \citep{roeder2021linear, jiang2024originslinearrepresentationslarge, marconato2024all}.  

\textbf{Interpretability of LLMs}. This paper contributes to the literature on interpretability and steering of LLMs. Much work on finding concepts in LLM representations for steering relies on supervision, either from paired observations with a single-concept shift \citep{panickssery2024steeringllama2contrastive,turner2024steeringlanguagemodelsactivation,rimsky2024steering,li2024inferencetimeinterventionelicitingtruthful} or from examples of target LLM completions to prompts \citep{subramani2022extracting}. This prior work also focuses on applying the same steering vector to all examples, implicitly relying on the linear representation hypothesis as justification. In contrast, we make the assumption precise, and show how it leads to steering vectors. This paper also departs from supervised learning and focuses on learning with limited supervision. In this way, we propose a method that is similar to sparse autoencoders (SAEs) \citep{templeton2024scaling, engels2024languagemodelfeatureslinear, cunningham2023sparseautoencodershighlyinterpretable, rajamanoharan2024improvingdictionarylearninggated, gao2024scaling}. In contrast, our proposed method fits concept shifts, and provably identifies steering vectors while SAEs may not enjoy identifiability guarantees.

\textbf{Causal representation learning}. Finally, this paper builds on causal representation learning results that leverage sparsity constraints. \citet{ahuja2022weakly}, \citet{locatello2020weakly}, and \citet{brehmer2022weaklysupervisedcausalrepresentation} consider sparse latent perturbations and paired observations. In contrast, we focus on learning from multi-concept shifts. \citet{lachapelle2022disentanglement} focus on sparse interventions and sparse transitions in temporal settings, while \citet{lachapelle2023synergies}, \citet{layne2024sparsityregularizationtreestructuredenvironments}, \citet{xu2024sparsityprinciplepartiallyobservable}, and \citet{fumero2023leveragingsparsesharedfeature} leverage sparse dependencies between latents and tasks. In this paper, we adapt these assumptions and technical results for a novel setting: discovering steering vectors from LLM representations based on concept shift data. 
In work that is closest to ours, \citet{rajendran2024learning} recover linear subspaces that capture concepts up to linear transformations using concept-conditional datasets, and \citet{goyal2025causaldifferentiating} develop an identifiable contrastive learning approach to discover behavior-mediating concepts, but cannot extract steering vectors.
In contrast, we focus on multi-concept shifts and how these lead to identifiable steering vectors. 

\textbf{Sparse coding and dictionary learning.} Classical dictionary learning provides identifiability guarantees under geometric constraints on the dictionary by leveraging properties such as the restricted isometry property (RIP) \citep{candes2005decodinglinearprogramming, baraniuk2008simple} and incoherence conditions \citep{donoho2003optimally, gribonval2015sample} (\cref{apx:dict-vs-ssae}). These assumptions ensure sparse codes are uniquely recoverable, but require approximately uncorrelated dictionary columns, an assumption which might not hold for LLM representations where concepts like truthfulness and harmlessness are intrinsically correlated. A further distinction is amortisation. Standard SAEs learn an encoder to approximate instance-wise sparse inference. Recent analysis \citep{oneill2025computeoptimalinferenceprovable} shows this introduces a provable amortisation gap: linear-nonlinear encoders cannot implement optimal sparse inference even when the dictionary is recoverable.

\section{Conclusion}
We propose Sparse Shift Autoencoders ({\isae}s) for discovering accurate steering vectors from multi-concept paired observations as an alternative to both SAEs, and approaches relying on supervised data. Key to this result are the identifiability guarantees that the \isae enjoys as a consequence of considering sparse concept shifts. We study the \isae empirically on several real language tasks, and find evidence that the method facilitates accurate steering learned via limited supervision. However, we stress that these experiments are intended to validate the identifiability results in \cref{sec:wscrl} and their implications for accurate steering. Although we include effects of steering on generated text (\cref{fig:bias-stats}), to fully understand the impacts of the \isae on steering research, especially LLM alignment, more evaluation is needed on embeddings from more complex datasets, and on more challenging tasks. Large-scale real-world evaluation remains an important direction for future work.

\section*{Impact Statement}
\label{sec:impact}
This paper presents technical advancements to a new field of machine learning focused on steering the behaviour of large language models at inference time, i.e., without requiring access to the model's parameters. Steering methods have already begun to play a role in the alignment of LLMs to be e.g., more truthful. We present a new method that could speed up steering research by allowing practitioners to recover steering vectors without the need for supervision, a previous limitation of steering methods. As such, this work could have a positive impact on LLM safety and alignment research. Nevertheless, we flag that contributions towards steering such as ours should be empirically evaluated carefully to avoid over-claiming LLM safety. We acknowledge that while the empirical studies we conduct demonstrate the advantages of identifiable methods such as \isae for steering, further evaluation is necessary to the method's use in AI safety research.

\bibliography{iclr2026_conference}
\bibliographystyle{icml2026}

%%%%%%%%%%%%%%%%%%%%%%%%%%%%%%%%%%%%%%%%%%%%%%%%%%%%%%%%%%%%%%%%%%%%%%%%%%%%%%%
%%%%%%%%%%%%%%%%%%%%%%%%%%%%%%%%%%%%%%%%%%%%%%%%%%%%%%%%%%%%%%%%%%%%%%%%%%%%%%%
% APPENDIX
%%%%%%%%%%%%%%%%%%%%%%%%%%%%%%%%%%%%%%%%%%%%%%%%%%%%%%%%%%%%%%%%%%%%%%%%%%%%%%%
%%%%%%%%%%%%%%%%%%%%%%%%%%%%%%%%%%%%%%%%%%%%%%%%%%%%%%%%%%%%%%%%%%%%%%%%%%%%%%%
\newpage
\appendix
\onecolumn
\epigraph{..we understand the world by studying change, not by studying things..}{As quoted in the Order of Time, Anaximander}
\tableofcontents
\newpage
\section{Theory}
\label{apx:theory}
\subsection{Notation and Glossary}
\label{apx:gloss}
\vspace{-0.1cm}
\centerline{\bf General notation}
\vspace{-0.2cm}
\centerline{\rule{0.95\linewidth}{0.5pt}}
\bgroup
\def\arraystretch{1.5}
%\vspace{0.2cm}
%\centerline{\textit{}}
\begin{tabular}{>{\centering\arraybackslash}p{2.25in} >{\arraybackslash}p{3.15in}}

$\displaystyle k$ & integer\\
$\displaystyle [k]$ & set of all integers between $1$ and $k$, inclusively\\
$\displaystyle S \subseteq [k]$ & set\\
$\displaystyle |S|$ & cardinality of a set\\
$\displaystyle S \backslash S'$ & set subtraction (set of elements of $S$ that are not in $S'$)\\ 
$\displaystyle \lambda$ & scalar\\ 
%$\displaystyle \mathcal{S}$ & domain set\\
$\displaystyle \x$ & vector and vector-valued random variables\\
$\displaystyle x_k$ & element $k$ of a random vector $\x$ \\
$\displaystyle \x_S$ & subvector with element $x_i$ for $i \in S$\\
$\displaystyle \A$ & matrix\\
$\displaystyle \A_{i,j}$ & element $i, j$ of matrix $\A$ \\
$\displaystyle \A_{:, i}$ & column $i$ of matrix $\A$ \\
$\displaystyle \A_{S}$ & matrix with columns $\A_{:,j}$ for $j \in S$\\
$\displaystyle \A^+$ & pseudo-inverse of a matrix $\A$ \\
$\displaystyle \rve_k \in \mathbb{R}^n$ & standard basis vector of the form $[0,\dots,0,1,0,\dots,0]$ with a 1 at position $k$\\ 
$\displaystyle f: \X \rightarrow \Z$ & function $f$ with domain $\X$ and codomain $\Z$\\
$\displaystyle f \circ g $ & composition of the functions $f$ and $g$ \\
$\displaystyle || \x ||_p $ & $\ell_p$ norm of $\x$ \\ [2ex]
%$\displaystyle \{k\}$ & the set with the single element $k$ \\

%$\displaystyle \mathbb{R}$ & set of real numbers\\

% \egroup
% \vspace{0.5cm}
% \bgroup
% \def\arraystretch{1.5}
%\end{tabular}
% \egroup
% \vspace{0.5cm}
% \bgroup
% \def\arraystretch{1.5}
%\vspace{0.5cm}
%\centerline{\textit{Calculus}}
%\begin{tabular}{>{\centering\arraybackslash}p{3.25in} >{\arraybackslash}p{3.15in}}
% NOTE: the [2ex] on the next line adds extra height to that row of the table.
% Without that command, the fraction on the first line is too tall and collides
% with the fraction on the second line.
% $\displaystyle\frac{d y} {d x}$ & Derivative of $y$ with respect to $x$\\ [2ex]
$\displaystyle \frac{\partial y} {\partial x} $ & partial derivative of $y$ with respect to $x$ \\ [2ex]
$\displaystyle \nabla_\vx f(\vx) \in \R^{m\times n}$ & Jacobian matrix of $f: \R^n \rightarrow \R^m$\\ [2ex]
$\displaystyle \nabla_\vx^2 f(\vx) \in \R^{n\times n}$ & Hessian matrix of $f: \R^n \rightarrow \R$\\
%\end{tabular}
%\vspace{0.25cm}

%\centerline{\textit{Probability}}
%\begin{tabular}{>{\centering\arraybackslash}p{3.25in} >{\arraybackslash}p{3.15in}}
$\displaystyle \mathbb{P}$ & probability measure/distribution \\
%$\displaystyle p(S)$ & probability distribution over a continuous variable, or over a variable whose type has not been specified\\
$\displaystyle  \E_{\x} [ f(\x) ]$ & expectation of $f(\x)$ with respect to $\x$ \\
\end{tabular}
% \egroup
\vspace{0.25cm}

%\centerline{ \textit{Functions}}
% \bgroup
% \def\arraystretch{1.5}
%\begin{tabular}{>{\centering\arraybackslash}p{3.25in} >{\arraybackslash}p{3.15in}}

%\end{tabular}
\egroup
\newpage
\centerline{\bf Glossary}
\vspace{-0.2cm}
\centerline{\rule{0.75\linewidth}{0.5pt}}
\bgroup
\def\arraystretch{1.5}
\begin{tabular}{>{\centering\arraybackslash}p{2.25in} >{\arraybackslash}p{3.15in}}
$\displaystyle \x \in \R^{d_x}$ & observation \\
$\displaystyle \z \in \R^{d_z}$ & pretrained representation\\
$\displaystyle \c \in \R^{d_c}$ & ground-truth concept vector \\
%$\displaystyle k$ & concept index\\
%$\displaystyle \lambda$ & strength of concept variation\\
$\displaystyle \tilde \c_{k, \lambda}$ & ground-truth concept vector after varying concept $k$ by $\lambda$ from $\c$\\
$\displaystyle \tilde \x_{k, \lambda}$ & observation corresponding to  $\tilde \c_{k, \lambda}$\\
$\displaystyle \tilde \z_{k, \lambda}$ & pretrained representation corresponding to  $\tilde \c_{k, \lambda}$\\
%$\displaystyle d_x$ & dimension of observations \\
%$\displaystyle d_z$ & dimension of learned representations\\
%$\displaystyle d_c$ & dimension of concept representations\\
$\displaystyle \X \subseteq \mathbb{R}^{d_x}$ & support of observations\\
$\displaystyle \Z \subseteq \mathbb{R}^{d_z}$ & support of pretrained representations\\
$\displaystyle \C \subseteq \mathbb{R}^{d_c}$ & support of ground-truth concept vectors\\
$\displaystyle S \subseteq [d_c]$ & subset of varying concepts in a given pair $(\x,\tilde\x)$ \\
$\displaystyle V \subseteq [d_c]$ & subset of concepts allowed to vary between $\x$ and $\tilde \x$ \\
%$\displaystyle |S|$ & number of concepts varying within a pair\\
%$\displaystyle |V|$ & number of concepts varying across all pairs\\
%$\displaystyle [d_c] \backslash V$ & concepts in $[d_c]$ that are not in $V$\\
$\deltac$ & concept shift vector\\
$\hatdeltac$ & estimated concept shift vector  \\
$\deltaz$ & pretrained representation shift vector  \\
%$\deltac_V$ & concept shift vector with shifts in concepts in $V$  \\
% $\displaystyle \pi(k)$ & permutation of $k \in \sigma_{|V|}$, the set of all permutations of varying concept indices \\
$\displaystyle g : \C \rightarrow \X$ & map from concept representations to observations\\
$\displaystyle f : \X \rightarrow \Z$ & map from observations to learned representations \\
$\displaystyle r : \Z \rightarrow \C$ & encoding function \\
$\displaystyle \hat{r} : \C \rightarrow \Z$ & estimated encoding function \\
$\displaystyle q : \C \rightarrow \Z$ & decoding function\\
$\displaystyle \hat{q} : \C \rightarrow \Z$ & estimated decoding function \\
$\displaystyle \phi_{k, \lambda} : \Z \rightarrow \Z$ & steering function \\
$\displaystyle \hat{\phi}_{k, \lambda} : \Z \rightarrow \Z$ & estimated steering function \\
$\displaystyle \A$ & linear map between concept representations and learnt representations\\
%$\displaystyle \A_V$ & subset of the linear map between concept representations for concepts in $V$ and their learnt representations \\

\end{tabular}

\subsection{Steering Functions}
\label{apx:steering}
From \cref{fig:steering}, for any concept $k$, the steering function $\phi_{k, \lambda}$ mirrors the transformations between concepts described as $\tilde \c_{k, \lambda} := \psi_{k, \lambda}(\c)$ in the learnt representation space through functions defined as:
\begin{definition} (\textbf{Steering function})
\label{defn:steering_fn}
    Fix a target concept $k$ and $\lambda \in \mathbb{R}$. %Then, for each concept representation $\c$, the perturbed concept representation $\tilde \c_{k, \lambda} = \c + \lambda \mathbf{e}_k$, where $\mathbf{e}_k$ is the $k$-th standard basis vector, corresponds to the counterfactual observation $\tilde \x_k = g(\tilde \c_k)$  in concept $k$.
    A \textbf{steering function} $\phi_{k,\lambda}: \mathcal{Z} \rightarrow \mathcal{Z}$ is a function such that for all $\c \in \C$, $\phi_{k, \lambda}(f(g(\c))) = f(g(\psi_{k, \lambda}(\c)))$.
\end{definition}

According to \cref{defn:steering_fn}, a steering function \footnote{Steering functions are not guaranteed to exist. However, if $f$ and $g$ are injective, we have $\phi_{\lambda, k}(\z) = f(g(g^{-1}(f^{-1}(\z)) + \lambda\mathbf{e}_k))$.} maps each representation $\z = f(\x) = f(g(\c)$ to its perturbed analog $\tilde\z_{\lambda,k} \coloneqq f(\tilde\x_{\lambda,k})$, where $\tilde{\mathbf{x}}_{k,\lambda} \coloneqq g(\tilde{\mathbf{c}}_{k,\lambda})$ is the corresponding perturbed observation. Thus, if the $k$-th concept is language, a steering function maps $\z = f(\x)$, the embedding of a sentence $\x$, to $\tilde \z_{k, \lambda} = f(\tilde \x_{k, \lambda})$, the embedding of the same sentence written in a 
different language. The form of the steering function depends on the form of the transformations $\psi_{k, \lambda}$ in concept space $\mathcal{C}$. We assume transformations $\psi_{k, \lambda}$ to be additive perturbations:

% \begin{definition}
%     (\textbf{Concept perturbation}): The perturbed concept vector is denoted as $\tilde{\mathbf{c}}_{k,\lambda} \coloneqq \mathbf{c} + \lambda \mathbf{e}_k$, , where $\mathbf{e}_k$ is the $k$-th standard basis vector. 
% \label{defn:affordance}
% \end{definition}

\begin{figure}
\centering
\includegraphics[scale=0.27]{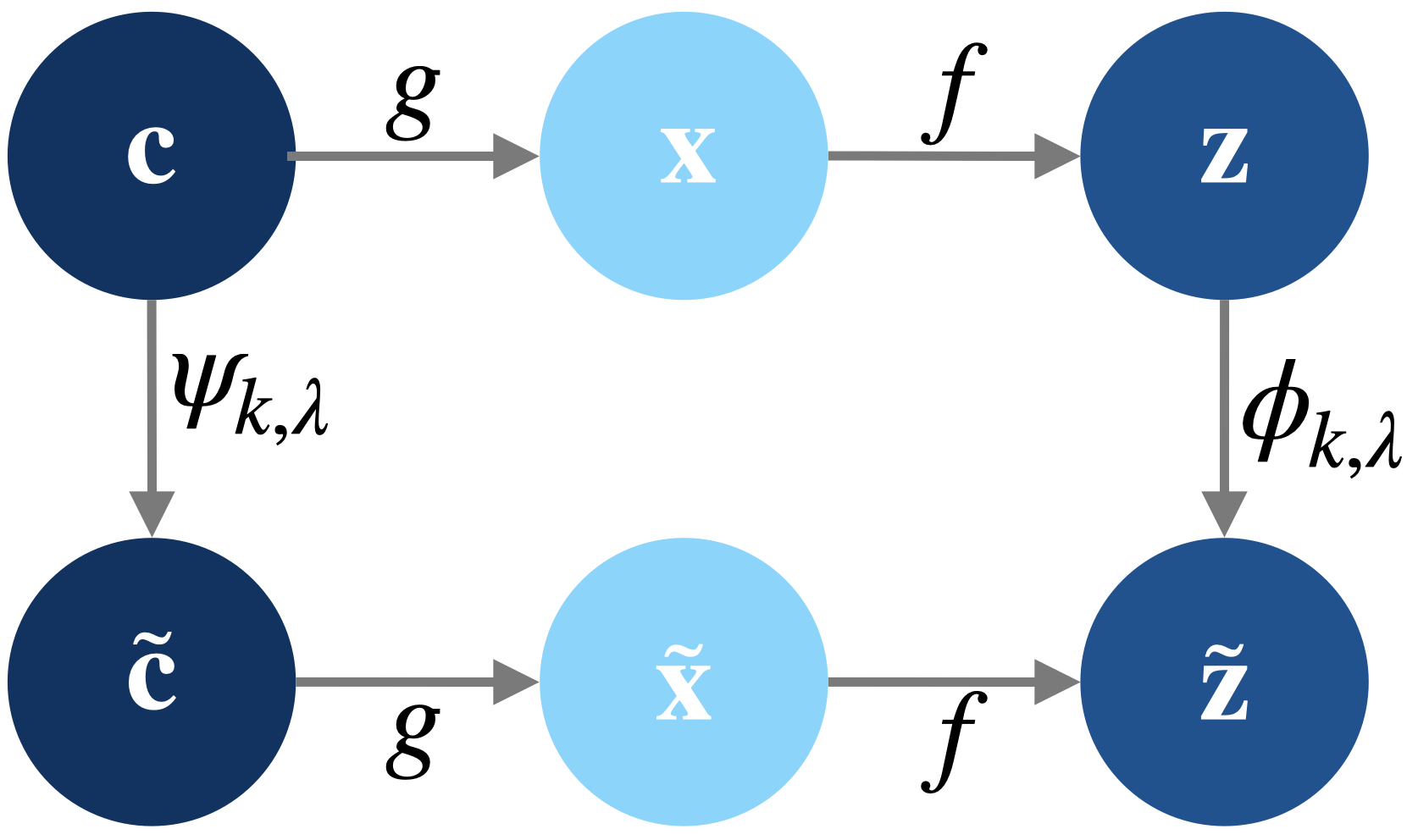}

\caption{A steering function $\phi_{k, \lambda}$ is s.t. the above diagram commutes, i.e., $\phi_{k, \lambda}(f(g(\c))) = f(g(\psi_{k, \lambda}(\c))) \forall \c$. (see \cref{defn:steering_fn}).}

% , i.e., $\phi_{\lambda, k}(f(g(\c))) = f(g(\c + \lambda \rve_k))$ for all $\c$.}
\label{fig:steering}
\end{figure}

 %  as illustrated in \cref{fig:steering}.
 
  To model the additive changes in $\c$, one can use an analogous additive perturbation map in $\z$ s.t. $\tilde \z_{k, \lambda} := \phi_{k, \lambda}(\z) $ can be written as $\tilde \z_{k, \lambda} := \z + \deltaz_{k, \lambda}$, where $\deltaz_{k, \lambda}$ might be an arbitrarily dense vector in $\Z$. 

In practice, a steering function $\phi_{\lambda,k}$ can be learned via supervised learning given a dataset comprising of carefully designed paired observations $(\x, \tilde \x_{k})$, in which a single concept changes between $\x$ and $\tilde \x_{k}$ \citep{shen2017styletransfernonparalleltext, turner2024steeringlanguagemodelsactivation,rimsky2024steering}. However, such a dataset might be difficult to acquire. This raises the following question, at the heart of our contribution:

\textit{How can we learn a steering function $\phi_{k, \lambda}$ with a dataset of paired observations $(\x, \tilde\x)$ in which multiple concepts vary?}

Thus, unsupervised approaches such as sparse autoencoders (SAEs) \citep{cunningham2023sparseautoencodershighlyinterpretable} are often employed towards steering distinct concepts. In this paper, we develop sufficient desiderata to show how identifiability leads to better steering performance.

\subsection{Linear Identifiability is insufficient for steering.}
\label{apx:linid_insuff}

In \cref{sec:wscrl} we showed how identifiability up to permutation and scaling leads to distinct steering vectors for individual concepts. Here, we show that the same strategy fails when concept shifts are only linearly identified, i.e., $\hat q \coloneqq \A_V \mathbf{L}$. In this case, we see that 
\begin{align*}
% \begin{split} \nonumber
    \hat q(\rve_k) = \A_V \rmL \rve_k = \sum_{j=1}^{|V|} \rmL_{j,k} \A \rve_j = \A\sum_{j=1}^{|V|} \rmL_{j,k} \rve_j \,,
% \end{split}
\end{align*}
which itself implies that 
\begin{align*}
    \z + \hat q(\mathbf{e}) = \A\c + \A\sum\nolimits_{j=1}^{|V|} \rmL_{j,k} \rve_j = \A(\c + \sum\nolimits_{j=1}^{|V|} \rmL_{j,k} \rve_j) = f(g(\c + \sum\nolimits_{j=1}^{|V|} \rmL_{j,k} \rve_j)) \,.
\end{align*}

That is, each learned steering vector $\hat{q}(\rve_k)$ can potentially change every concept in $V$. To recover the steering vectors, we need to learn $\rmL^{-1}$, which requires paired samples $(\tilde\z_{j, \lambda}, \z)$ that vary in a single concept for each concept $j$ \citep{rajendran2024learning}. This highlights the importance of enforcing sparsity, as it is the key element allowing us to go from $\hat{q} \coloneqq \A_V\rmL$ (\cref{prop:linear_ident}) to $\hat{q} \coloneqq \A_V\rmD\rmP$ (\cref{prop:perm_ident}). 

A potential advantage of linearly identifying steering vectors, however, is that learning the linear function $\rmL^{-1}$ may require fewer samples than learning a potentially nonlinear steering function (\cref{defn:steering_fn}) from counterfactual samples.

\subsection{Proof of \cref{prop:linear_ident} (linear identifiability)}
\label{app:linear_ident}

\linearIdent*

\begin{proof}
    We note that the solution $q^* \coloneqq \A_V$ and $r^* \coloneqq \A_V^+$ minimizes the loss since
    \begin{align}
        \mathbb{E}_{\x, \tilde\x}||\deltaz - q^*(r^*(\deltaz))||^2_2 &= \mathbb{E}_{\x, \tilde\x}||\deltaz - \A_V\A_V^{+}\deltaz||^2_2 \\
        &=\mathbb{E}_{\c, \tilde\c}||\A_V \deltac_V - \A_V(\A_V^{+}\A_V) \deltac_V||^2_2 \\
        &=\mathbb{E}_{\c, \tilde\c}||\A_V \deltac_V - \A_V \deltac_V||^2_2 \\
        &= 0 \,,
    \end{align}
    where we used the fact that $\A_V$ is injective and thus $\A_V^+\A_V = \mathbf{I}$. This means all optimal solutions must reach zero loss.

    Now consider an arbitrary minimizer $(\hat r, \hat q)$. Since it is a minimizer, it must reach zero loss, i.e. 
    \begin{align}
        &\mathbb{E}_{\x, \tilde\x}||\deltaz - \hat q(\hat r(\deltaz))||^2_2 = 0 \\
        &\mathbb{E}_{\c, \tilde\c}||\A_V \deltac_V - \hat q(\hat r(\A_V\deltac_V))||^2_2 = 0
    \end{align}    
    This means we must have 
    \begin{align}\label{eq:988r9e0w}
        \A_V \deltac_V = \hat q(\hat r(\A_V\deltac_V)),\ \text{almost everywhere w.r.t. $p(\deltac_V)$.}
    \end{align}
     Because all functions both on the left and the right hand side are continuous, the equality must hold on the support of $p(\deltac_V)$, which we denote by $\Delta^c_V$. Moreover, since $\hat r$ and $\hat q$ are linear, they can be represented as matrices, namely $\mathbf{R} \in \sR^{|V| \times d_z}$ and $\mathbf{Q} \in \sR^{d_z \times |V|}$. We can thus rewrite \cref{eq:988r9e0w} as 
    \begin{align}\label{eq:98438383}
        \A_V \deltac_V = \mathbf{Q}\mathbf{R}\A_V\deltac_V\,,
    \end{align}
    which holds for all $\deltac_V \in \Delta^c_V$. By \cref{ass:suff_var}, we know there exists a set of $|V|$ linearly independent vectors in $\Delta^c_V$. Construct a matrix $\mathbf{C} \in \sR^{|V|\times|V|}$ whose columns are these linearly independent vectors. Note that $\mathbf{C}$ is invertible, by construction. 
    
    Since this \cref{eq:98438383} holds for all $\deltac_V \in \Delta^c_V$, we can write
    \begin{align}
        \A_V \mathbf{C} = \mathbf{Q}\mathbf{R}\A_V\mathbf{C}\\
        \A_V = \mathbf{Q}\mathbf{R}\A_V \,,
    \end{align}
    where we right-multiplied by $\mathbf{C}^{-1}$ on both sides. Since $\A_V$ is injective (\cref{ass:injective_Av}), we must have that $\mathbf{R}\A_V$ is injective as well. But since $\mathbf{R}\A_V$ is a square matrix, injectivity implies invertibility. Let us define $\mathbf{L} \coloneqq (\mathbf{R}\A_V)^{-1}$. We thus have
    \begin{align}
        \A_V &= \mathbf{Q}\mathbf{L}^{-1} \\
        \hat q = \mathbf{Q} &= \A_V\mathbf{L} \,,
    \end{align}
    which proves the first part of the statement.

    Now, we show that, for all $\z \in \text{Im}(\A_V)$, $\mathbf{R}\z = \mathbf{L}\A_V^+\z$. Take some $\z \in \text{Im}(\A_V)$. Because this point is in the image of $\A_V$, there must exists a point $\c \in \sR^{|V|}$ such that $\z = \A_V\c$. Now we evaluate
    \begin{align}
        \hat r(\z) = \mathbf{R}\z &= \mathbf{R}\A_V\c  \\
        &= \mathbf{L}^{-1}\c \label{eq:kksdm2949sk}\\ 
        &= \mathbf{L}^{-1}\A_V^+\A_V\c \label{eq:jdjdjdjnajajm}\\
        &= \mathbf{L^{-1}}\A_V^+\z \,, 
    \end{align}
    where we used the fact $\mathbf{R}\A_V = \mathbf{L}^{-1}$ in \cref{eq:kksdm2949sk} and the fact that $\A_V^+\A_V = \mathbf{I}$ in \cref{eq:jdjdjdjnajajm}. This concludes the proof.
\end{proof}

\subsection{Proof of \cref{prop:perm_ident} (permutation identifiability)}
\label{app:proof}
The proof is heavily based on \citet{lachapelle2023synergies} and \citet{xu2024sparsityprinciplepartiallyobservable}.

\permIdent*

\begin{proof}
    Recall that, in the proof of \cref{prop:linear_ident}, we showed that the solution $q^* \coloneqq \A_V$ and $r^* \coloneqq \A_V^+$ yields zero reconstruction loss, i.e.,
    \begin{align}
        \mathbb{E}_{\x, \tilde\x}||\deltaz - q^*(r^*(\deltaz))||^2_2 &= 0\,.
    \end{align}
    It turns out, this solution also satisfies the constraint $\mathbb{E}||r(\deltaz)||_0 \leq \beta \coloneqq \mathbf{E}||\deltac_V||_0$ since
    \begin{align}
        \mathbb{E}||r^*(\deltaz)||_0 = \mathbb{E}||\A_V^+(\A_V\deltac_V)||_0 = \mathbb{E}||\deltac_V||_0 = \beta \,,  
    \end{align}
    where we used the fact that $\deltaz = \A_V\deltac_V$ and $\A_V^+\A_V = \mathbf{I}$, since $\A_V$ is injective. This means that all optimal solutions to the constrained problem of \cref{eqn:recon,eqn:sparse_constraint} with $\beta \coloneqq \mathbb{E}||\deltac_V||_0$ must reach zero reconstruction loss.

    Let $(\hat r, \hat q)$ be an arbitrary solution to the constrained problem. By the above argument, this solution must reach zero loss. Thus, by the exact same argument as in \cref{prop:linear_ident}, there must exist an invertible matrix $\mathbf{L} \in \sR^{|V|\times |V|}$ such that 
    \begin{align}
        \hat q \coloneqq \A_V \mathbf{L} \quad \text{and} \quad \hat r(\z) \coloneqq \mathbf{L}^{-1}\A_V^+\z,\ \text{for all}\ \z\in\text{Im}(\A_V) \,.
    \end{align}
    Since $\hat r$ is optimal it must satisfy the constraint, which we rewrite as
    \begin{align}
        \mathbb{E}||\hat r (\deltaz)||_0 &\leq \mathbb{E}||\deltac_V||_0 \nonumber\\
        \mathbb{E}||\hat r (\A_V\deltac_V)||_0 &\leq \mathbb{E}||\deltac_V||_0 \nonumber\\
        \mathbb{E}||\mathbf{L}^{-1}\A_V^+(\A_V\deltac_V)||_0 &\leq \mathbb{E}||\deltac_V||_0 \nonumber\\
        \mathbb{E}||\mathbf{L}^{-1}\deltac_V||_0 &\leq \mathbb{E}||\deltac_V||_0\,, \label{eqn:core}
    \end{align}
    where we used the fact that $\hat r$ restricted to the image of $\A_V$ is equal to $\mathbf{L}^{-1}\A_V^+$ when going from the second to the third line.
    
    At this stage, we can use the same argument as \citet{lachapelle2023synergies} to conclude that $\mathbf{L}$ is a permutation-scaling matrix. For completeness, we present that result into \cref{lemma:synergies_proof} and its proof below. One can directly apply this lemma, thanks to \cref{ass:suffsupp} and the fact that sets of the form $\{\deltac_S \in \sR^{|V|} \mid \mathbf{a}^\top\deltac_S = 0\}$ with $\mathbf{a} \not = 0$ are proper linear subspaces of $\sR^{|V|}$ and thus have zero Lebesgue measure, and thus 
    $$\mathbb{P}_{\deltac_S \mid S}\{\deltac_S \in \sR^{|V|} \mid \mathbf{a}^\top\deltac_S = 0\} = 0 \,.$$
    This concludes the proof.
\end{proof}

The proof of the following lemma is taken directly from \citet{lachapelle2023synergies} (modulo minor changes in notation). The original work used this argument inside a longer proof and did not encapsulate this result into a modular lemma. We thus believe it is useful to restate the result here as a lemma containing only the piece of the argument we need. We also include the proof of \citet{lachapelle2023synergies} for completeness. Note that \citet{xu2024sparsityprinciplepartiallyobservable} also reused this result to prove identifiability up to permutation and scaling.

\begin{lemma}[\citet{lachapelle2023synergies}]\label{lemma:synergies_proof}
Let $\mathbf{L} \in \sR^{m\times m}$ be an invertible matrix and let $\x$ be an $m$-dimensional random vector following some distribution $\mathbb{P}_\x$. Define the set $S \coloneqq \{j \in [m] \mid \x_j \not= 0\}$, which is random (because $\x$ is random) with probability mass function given by $p(S)$. Let $\mathcal{S} \coloneqq \{S \subseteq [m] \mid p(S) > 0\}$, i.e. it is the support of $p(S)$. Assume that
\begin{enumerate}
    \item For all $j \in [m]$, we have $\bigcup_{S \in \mathcal{S} | j \notin S} S = [m] \setminus \{j \}$; and
    \item For all $S \in \mathcal{S}$, the conditional distribution $\mathbb{P}_{\x_S \mid S}$ is such that, for all nonzero $\mathbf{a} \in \sR^{|S|}$, $\mathbb{P}_{\x_S \mid S}\{\x_S \mid \mathbf{a}^\top \x_S = 0\} = 0$.
\end{enumerate}
Under these assumptions, if $\mathbb{E}||\mathbf{L}\x||_0 \leq \mathbb{E}||\x||_0$, then $\mathbf{L}$ is a permutation-scaling matrix, i.e. there exists a diagonal matrix $\mathbf{D}$ and a permutation matrix $\mathbf{P}$ such that $\mathbf{L} = \mathbf{D}\mathbf{P}$ 
\end{lemma}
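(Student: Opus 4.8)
The plan is to convert the statement about the random integer $\|\mathbf{L}\x\|_0$ into a deterministic statement about the sparsity pattern of $\mathbf{L}$, and then squeeze that pattern down to a permutation using invertibility together with the two hypotheses. Write $T_i := \{j \in [m] : L_{ij} \neq 0\}$ for the support of the $i$-th row of $\mathbf{L}$. First I would fix $S \in \mathcal{S}$ and condition on the event $\{\mathrm{supp}(\x) = S\}$. On this event $(\mathbf{L}\x)_i = \sum_{j \in S} L_{ij} x_j = \mathbf{a}^\top \x_S$ with $\mathbf{a} := (L_{ij})_{j \in S}$. If $T_i \cap S = \emptyset$ then $\mathbf{a} = \mathbf{0}$ and the entry is surely $0$; if $T_i \cap S \neq \emptyset$ then $\mathbf{a} \neq \mathbf{0}$, and the second hypothesis makes the hyperplane $\{\mathbf{a}^\top \x_S = 0\}$ a null set, so $(\mathbf{L}\x)_i \neq 0$ almost surely. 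Collecting the finitely many exceptional null sets over $i$, I get that, conditionally on $S$ and almost surely, $\|\mathbf{L}\x\|_0 = |\{i : T_i \cap S \neq \emptyset\}|$.

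Next I would produce a lower bound from invertibility. Since $\det \mathbf{L} \neq 0$, the Leibniz expansion has a nonvanishing term, so there is a permutation $\sigma$ of $[m]$ with $L_{i,\sigma(i)} \neq 0$, i.e. $\sigma(i) \in T_i$ for all $i$ (a perfect matching in the bipartite support graph of $\mathbf{L}$). For each $j \in S$ the matched row $i = \sigma^{-1}(j)$ satisfies $\sigma(i) = j \in T_i \cap S$, so it is active, and distinct $j$ give distinct rows; hence $|\{i : T_i \cap S \neq \emptyset\}| \geq |S|$. With the reduction above this gives $\|\mathbf{L}\x\|_0 \geq \|\x\|_0$ almost surely. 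The hypothesis $\mathbb{E}\|\mathbf{L}\x\|_0 \leq \mathbb{E}\|\x\|_0$ then says a nonnegative random difference has nonpositive mean, forcing $\|\mathbf{L}\x\|_0 = \|\x\|_0$ almost surely; equivalently $|\{i : T_i \cap S \neq \emptyset\}| = |S|$ for every $S \in \mathcal{S}$.

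Finally I would collapse the support to a permutation. Equality means the only active rows are the matched ones, i.e. for every $S \in \mathcal{S}$ and every $i$ one has $T_i \cap S \neq \emptyset \Rightarrow \sigma(i) \in S$. Suppose some row $i$ carried a stray nonzero at $j_0 \in T_i$ with $j_0 \neq \sigma(i)$. Applying the first hypothesis to $j = \sigma(i)$ produces $S \in \mathcal{S}$ with $\sigma(i) \notin S$ but $j_0 \in S$; then $j_0 \in T_i \cap S$ while $\sigma(i) \notin S$, contradicting the implication. Hence $T_i = \{\sigma(i)\}$ for every $i$, and since invertibility forbids zero rows, $\mathbf{L}$ has exactly one nonzero entry per row, placed by the bijection $\sigma$. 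Setting $D_{ii} := L_{i,\sigma(i)}$ and letting $\mathbf{P}$ be the permutation matrix of $\sigma$ yields $\mathbf{L} = \mathbf{D}\mathbf{P}$.

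The load-bearing ideas are the matching argument (invertibility is exactly what prevents $\|\mathbf{L}\x\|_0$ from dropping below $\|\x\|_0$) and the use of the first hypothesis to rule out off-matching entries. The point I would be most careful about is the almost-sure reduction in the first step: the second hypothesis must be invoked row-by-row and the finitely many exceptional null sets collected, and I must verify that the pointwise a.s. inequality $\|\mathbf{L}\x\|_0 \geq \|\x\|_0$ interacts correctly with the reverse inequality in expectation so that equality is forced for \emph{each} $S$ in the support, not merely on average.
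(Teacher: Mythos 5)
Your proof is correct and follows essentially the same route as the paper's: both reduce $\mathbb{E}\|\mathbf{L}\x\|_0$ to a count of rows whose support intersects $S$ via the measure-zero hyperplane argument, extract a permutation $\sigma$ from invertibility, force term-by-term (pointwise) equality from the expectation inequality, and invoke the first hypothesis to kill any off-matching nonzero entry. The only differences are presentational (you condition on the support event and argue a pointwise inequality $\|\mathbf{L}\x\|_0 \geq \|\x\|_0$, while the paper permutes the summands inside the expectation), so no further changes are needed.
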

\begin{proof}
    We start by rewriting the l.h.s. of $\mathbb{E}||\mathbf{L}\x||_0 \leq \mathbb{E}||\x||_0$ as
\begin{align}
    \sE\normin{\x}_{0} &= \sE_{p(S)}\sE[\sum_{j=1}^m \mathbf{1}(\x_j \not= 0) \mid S] \\
    &= \sE_{p(S)}\sum_{j=1}^m \sE[\mathbf{1}(\x_j \not= 0) \mid S] \\
    &= \sE_{p(S)}\sum_{j=1}^m \sP_{\x \mid S}\{\x \in \sR^m \mid \x_{j} \not= 0\}\\
    &= \sE_{p(S)}\sum_{j=1}^m \mathbf{1}(j \in S)\,,
\end{align}
where the last step follows from the definition of $S$.

Moreover, we rewrite $\sE\normin{\mathbf{L}\x}_{0}$ as
\begin{align}
    \sE\normin{\mathbf{L}\x}_{0} &= \sE_{p(S)}\sE[\sum_{j=1}^m \mathbf{1}(\mathbf{L}_{j, :}\x \not= 0)\mid S]\\
    &= \sE_{p(S)}\sum_{j=1}^m \sE[\mathbf{1}(\mathbf{L}_{j, :}\x \not= 0)\mid S]\\
    &= \sE_{p(S)}\sum_{j=1}^m \sE[\mathbf{1}(\mathbf{L}_{j, S}\x_S \not= 0)\mid S]\\
    &= \sE_{p(S)}\sum_{j=1}^m \sP_{\x \mid S}\{ \x \in \sR^m \mid \mathbf{L}_{j, S}\x_S \not= 0\} \,.
\end{align}

Notice that
\begin{align}
    \sP_{\x \mid S}\{ \x \in \sR^m \mid \mathbf{L}_{j, S}\x_S \not= 0\} &= 1 - \sP_{\x \mid S}\{ \x \in \sR^m \mid \mathbf{L}_{j, S}\x_S = 0\}\,. %\\
    %&= 1 - \sP_{\mW \mid S}[\forall i \in [k],\ \mW_{i, S} \mL_{S, j} = 0]
\end{align}
Define $N_j$ be the support of $\mL_{j, :}$, i.e., $N_j \coloneqq \{i \in [m] \mid \mL_{j, i} \not= 0 \}$. 

When $S \cap N_j = \emptyset$, we have that $\mL_{S,j} = \bm0$ and thus
$$\sP_{\x \mid S}\{ \x \in \sR^m \mid \mathbf{L}_{j, S}\x_S = 0\} = 1 \,.$$ 
When $S \cap N_j \not= \emptyset$, we have that $\mL_{j,S} \not= \bm0$, and thus, by the second assumption, we have that 
$$\sP_{\x \mid S}\{ \x \in \sR^m \mid \mathbf{L}_{j, S}\x_S = 0\} = 0 \,.$$ 

Thus we can write
\begin{align}
    \sP_{\x \mid S}\{ \x \in \sR^m \mid \mathbf{L}_{j, S}\x_S \not= 0\} &= 1 - \sP_{\x \mid S}\{ \x \in \sR^m \mid \mathbf{L}_{j, S}\x_S = 0\}\\
    &= 1 - \mathbf{1}(S \cap N_j = \emptyset) \\
    &= \mathbf{1}(S \cap N_j \not= \emptyset)\,,
\end{align}
which allows us to write
\begin{align}
    \sE\normin{\mathbf{L}\x}_{0} &= \sE_{p(S)}\sum_{j=1}^m \mathbf{1}(S \cap N_j \not= \emptyset)\,.
\end{align}
The original inequality $\mathbb{E}||\mathbf{L}\x||_0 \leq \mathbb{E}||\x||_0$ can thus be rewritten as
\begin{align}
    \sE_{p(S)}\sum_{j=1}^m \mathbf{1}(S \cap N_j \not= \emptyset) &\leq \sE_{p(S)}\sum_{j=1}^m \mathbf{1}(j \in S) \,. \label{eq:before_perm}
\end{align}
Since $\mL$ is invertible, there exists a permutation $\sigma: [m] \rightarrow [m]$ such that, for all $j \in [m]$, $\mL_{j, \sigma(j)} \not=0$ (e.g. see Lemma B.1 from \citet{lachapelle2023synergies}). In other words, for all $j \in [m]$, $j \in N_{\sigma(j)}$. Of course we can permute the terms of the l.h.s. of~\cref{eq:before_perm}, which yields
\begin{align}
    \sE_{p(S)}\sum_{j=1}^m \mathbf{1}(S \cap N_{\sigma(j)} \not= \emptyset) &\leq \sE_{p(S)}\sum_{j=1}^m \mathbf{1}(j \in S)\\
    \sE_{p(S)}\sum_{j=1}^m \left(\mathbf{1}(S \cap N_{\sigma(j)} \not= \emptyset) - \mathbf{1}(j \in S)\right) &\leq 0 \,. \label{eq:sum_of_positive}
\end{align}
We notice that each term $\mathbf{1}(S \cap N_{\sigma(j)} \not= \emptyset) - \mathbf{1}(j \in S) \geq 0$ since whenever $j \in S$, we also have that $j \in S \cap N_{\sigma(j)}$ (recall $j \in N_{\sigma(j)}$). Thus, the l.h.s. of \cref{eq:sum_of_positive} is a sum of non-negative terms which is itself non-positive. This means that every term in the sum is zero:
\begin{align}
    \forall S \in \gS,\ \forall j \in [m],&\ \mathbf{1}(S \cap N_{\sigma(j)} \not= \emptyset) = \mathbf{1}(j \in S)\,.
\end{align}
Importantly,
\begin{align}
    \forall j \in [m],\ \forall S \in \gS,\ j \not\in S \implies S \cap N_{\sigma(j)} = \emptyset\,,
\end{align}
and since $S \cap N_{\sigma(j)} = \emptyset \iff N_{\sigma(j)} \subseteq S^c$ we have that
\begin{align}
    \forall j \in [m],\ \forall S \in \gS,\ j \not\in S \implies N_{\sigma(j)} \subseteq S^c \\
    \forall j \in [m],\ N_{\sigma(j)} \subseteq \bigcap_{S \in \gS \mid j \not\in S} S^c \,. \label{eq:last_intersect}
\end{align}
By assumption, we have $\bigcup_{S \in \gS \mid j \not\in S} S = [m] \setminus \{j\}$. By taking the complement on both sides and using De Morgan's law, we get $\bigcap_{S \in \gS \mid j \not\in S} S^c = \{j\}$, which implies that $N_{\sigma(j)} = \{j\}$ by~\Cref{eq:last_intersect}. Thus, $\mL = \mathbf{D}\mathbf{P}$ where $\mathbf{D}$ is an invertible diagonal matrix and $\mathbf{P}$ is a permutation matrix.
\end{proof}

\subsection{Distributions satisifying \cref{ass:suffsupp}}
\label{apx:suffsupp}

In $\mathbb{R}^{|S|}$, any lower-dimensional subspace has Lebesgue measure 0. By defining the probability measure of $\deltac_S | S$ with respect to the Lebesgue measure, its integral over any lower-dimensional subspace of $\mathbb{R}^{|s|}$ will be 0. Consider a few examples of $\mathbb{P}_{\deltac_S | S}$ directly taken from \citep{lachapelle2023synergies} with adapted notation just for illustration purposes.

\begin{figure}[ht]
\centering
    \includegraphics[width=0.3\linewidth]{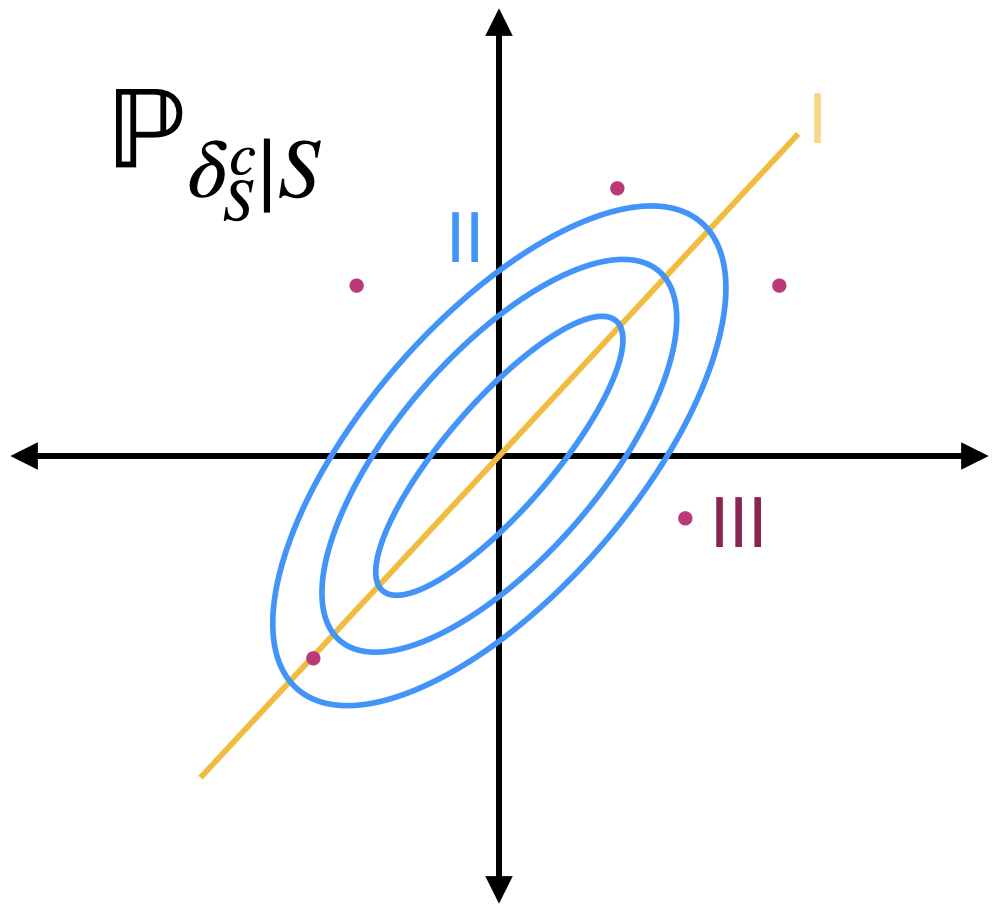}
    \caption{Three illustrative examples of $\mathbb{P}_{\deltac_S | S}$: Only distribution II satisfies \cref{ass:suffsupp}.}
    \label{fig:asm24}
\end{figure}

In \cref{fig:asm24}, distributions I and III do not satisfy \cref{ass:suffsupp} whereas distribution II does. This is because I represents the support of a Gaussian distribution with a low-rank covariance and III represents finite support; both of these distributions will be measure zero in $\mathbb{R}^{|S|}$. On the other hand, II represents level sets of a Gaussian distribution with full-rank covariance. Please refer to \citet{lachapelle2023synergies} for a comprehensive explanation.

\subsection{Interpreting the Linear Representation Hypothesis}

\label{apx:lrh}
\textbf{Assumption 1}[Linear representation hypothesis]
The generative process $g: \C \to \X$ and the learned encoding function $f: \X \to \Z$ are such that  $f \circ g : \C \to \Z$ is linear, implying there exists a $d_z \times d_c$ real matrix $\A$ such that:
    \begin{align}
        % \label{eq:linear_entangle}
        \z = f(g(\c)) = \A \c \ .
        %\deltaz = f(g(\deltac)) = \A \deltac,
    \end{align}
The linear representation hypothesis (LRH) implies that the learned representation $\z$ \emph{linearly encodes concepts}. 
% We might wonder whether the representations we learn in practice possess this property. 
A long line of work provides evidence for this hypothesis (c.f. \citet{rumelhart1973model, hinton1986learning, mikolov-etal-2013-linguistic, ravfogel2020null}). More recently, theoretical work justifies why linear properties could arise in these models (c.f. \citet{ jiang2024originslinearrepresentationslarge, roeder2021linear, marconato2024all}). \Cref{sec:rel} provides a full list of related work, while \cref{apx:lrh} provides an explanation of the equivalence between LRH's different interpretations. \citet{rajendran2024learning} also leverage the LRH in their work.
\begin{corollary}
    If concept changes act on latent embeddings following $\tilde \z = \z + \deltaz$ and $q$ and $r$ are injective, they must be affine transformations.
\end{corollary}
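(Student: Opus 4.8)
The plan is to read the hypothesis ``$\tilde\z = \z + \deltaz$'' as a \emph{shift-equivariance} condition on the concept-to-embedding map $q$, turn it into a Cauchy functional equation whose continuous solutions are exactly the affine maps, and then use injectivity to transfer affineness from $q$ to the encoder $r$. To make the hypothesis precise, I would write $\z = q(\c)$ and $\tilde\z = q(\c + \deltac)$, so that the statement ``a concept shift $\deltac$ induces an embedding shift $\deltaz$ that does not depend on the base concept $\c$'' becomes
\[
    q(\c + \deltac) - q(\c) = \psi(\deltac) \qquad \text{for all admissible } \c, \deltac,
\]
where $\psi$ depends on $\deltac$ alone. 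Fixing a base point $\c_0 \in \C$ gives $\psi(\deltac) = q(\c_0 + \deltac) - q(\c_0)$.

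Next I would establish additivity of $\psi$ by telescoping: for any two shifts $\deltac_1, \deltac_2$,
\[
    \psi(\deltac_1 + \deltac_2) = \big[q(\c + \deltac_1 + \deltac_2) - q(\c + \deltac_1)\big] + \big[q(\c + \deltac_1) - q(\c)\big] = \psi(\deltac_2) + \psi(\deltac_1),
\]
so $\psi$ solves the vector-valued Cauchy equation $\psi(\deltac_1 + \deltac_2) = \psi(\deltac_1) + \psi(\deltac_2)$. Invoking continuity of $q$ (hence of $\psi$), the natural regularity for maps built from smooth generative processes and neural encoders, a continuous additive map $\R^{d_c} \to \R^{d_z}$ is $\R$-linear, so $\psi(\deltac) = \mathbf{M}\deltac$ for some matrix $\mathbf{M}$. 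Substituting back yields $q(\c) = \mathbf{M}\c + \mathbf{b}$ with $\mathbf{b} = q(\c_0) - \mathbf{M}\c_0$, i.e. $q$ is affine.

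Finally I would use injectivity. Since $q$ is injective and affine, $\mathbf{M}$ has full column rank, so $\mathbf{M}^{+}\mathbf{M} = \mathbf{I}$ and $q$ admits an affine inverse on its image, $r(\z) = \mathbf{M}^{+}(\z - \mathbf{b})$ for $\z \in \mathrm{Im}(q)$; as the encoder $r$ must coincide with this inverse, $r$ is affine as well. (A symmetric argument applied directly to $r$ works identically.)

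The main obstacle is the Cauchy step. Without a regularity assumption, additive maps can be highly pathological (non-measurable Hamel-basis constructions), and injectivity alone does \emph{not} exclude them, since injective additive non-linear maps exist even on $\R$. Thus the load-bearing hypothesis is really continuity (or measurability / local boundedness), which I would state explicitly; injectivity's genuine role is only to pass affineness from $q$ to its inverse $r$. A secondary technicality is the domain: the telescoping requires $\c, \c + \deltac_1, \c + \deltac_1 + \deltac_2 \in \C$, so I would assume $\C$ is convex with nonempty interior (e.g. all of $\R^{d_c}$), derive the affine form on the interior, and extend to $\C$ using that two affine maps agreeing on a set with nonempty interior agree everywhere.
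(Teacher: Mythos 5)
Your proof is correct, but it takes a genuinely different route from the paper's. The paper follows \citet{ahuja2022weakly}: it differentiates the identity $q(\c + \deltac_V) = q(\c) + \deltaz$ with respect to $\c$, concludes that the Jacobian of $q$ is the same at $\c$ and at $\c + \deltac_V$, then argues that each component's Hessian must vanish, so $q$ is affine; affineness of $r$ is obtained by running the identical argument on the symmetric equation $r(\z + \deltaz) = r(\z) + \deltac_V$. You instead solve a Cauchy functional equation: base-point independence of the embedding shift gives additivity of $\psi$ by telescoping, and continuity upgrades additivity to linearity, hence $q$ is affine. What your route buys: it requires only continuity, whereas the paper's gradient/Hessian computation implicitly assumes $q$ is twice differentiable (a hypothesis the corollary never grants), and you correctly flag the two tacit assumptions --- some regularity is genuinely load-bearing (injectivity alone does not exclude pathological additive maps), and the domain must support the telescoping (convexity or nonempty interior of $\C$). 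What the paper's route buys: given smoothness, it is shorter and plugs directly into the cited machinery of \citet{ahuja2022weakly}. One caution on your final step: transferring affineness to $r$ by identifying $r$ with the affine inverse of $q$ on $\mathrm{Im}(q)$ presupposes $r \circ q = \mathrm{id}$, which the corollary does not state; your parenthetical fallback of applying the symmetric Cauchy argument directly to $r$ is the right fix, and it is exactly how the paper handles $r$.
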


\textbf{Proof}:  Starting with the interpretation of the \textit{linear representation hypothesis} such that $\thicktilde{\z} = \z + \deltaz$ where $\z = q (\c)$ and $\thicktilde{\z} = q(\thicktilde{\mathbf{c}})$:

$$\implies q(\thicktilde{\c}) = q(\c) + \deltaz$$

Since we identify only the varying concepts, this corresponds to identifying a subspace of the original concept space in which $\tilde \c = \c + \deltac_V$.

Using the injectivity of $q$ (\cref{ass:injective_Av}):
\begin{flalign}
    q(\mathbf{c} + \deltac_V) = q(\mathbf{c}) + \deltaz
    \label{eqn:prejacob}
\end{flalign}

Taking the gradient of both the LHS and the RHS wrt $\mathbf{c}$,

$$\frac{\partial(\mathbf{c} + \deltac_V)}{\partial(\mathbf{c})} \nabla_{(\mathbf{c} + \deltac_V)} q(\mathbf{c} + \deltac_V) = \nabla_{\mathbf{c}} q(\mathbf{c})$$
$$\nabla_{(\mathbf{c} + \deltac_V)} q(\mathbf{c} + \deltac_V) = \nabla_{\mathbf{c}} q(\mathbf{c})$$
\begin{flalign}
    \mathbf{J}^T(\mathbf{c} + \deltac_V) = \mathbf{J}^T(\mathbf{c})
\end{flalign}
Where $\mathbf{J}(\mathbf{c})$ is the Jacobian of $q$ at $\mathbf{c}$ and $\mathbf{J}(\mathbf{c} + \deltac_V)$ is the Jacobian of $q$ at $\mathbf{c} + \deltac_V$.

 $$\begin{bmatrix}
\nabla q_1(\mathbf{c} + \deltac_V)\\
\nabla q_2(\mathbf{c} + \deltac_V)\\
\nabla q_3(\mathbf{c} + \deltac_V)\\\
\cdot \\
\cdot \\
\nabla q_{d_Z}(\mathbf{c} + \deltac_V)
\end{bmatrix} - \begin{bmatrix}
\nabla q_1(\mathbf{c})\\
\nabla q_2(\mathbf{c})\\
\nabla q_3(\mathbf{c})\\\
\cdot \\
\cdot \\
\nabla q_{d_Z}(\mathbf{c})
\end{bmatrix} = 0$$

considering the $j$\textsuperscript{th} component of the difference,

$$\begin{bmatrix}
\nabla^2 q_j(\theta_1)\\
\nabla^2 q_j(\theta_2)\\
\nabla^2 q_j(\theta_2)\\\
\cdot \\
\cdot \\
\nabla^2 q_j(\theta_d)
\end{bmatrix}(\deltac_V) = 0$$

Following the proof in \citep{ahuja2022weakly},$\nabla^2q_j(\mathbf{c}) = 0$, which implies $q(\mathbf{c}) = \mathbf{A}_V\mathbf{c + b}$ where $\mathbf{A}_V \in \mathbb{R}^{d_Z \times d_Z}, \mathbf{b} \in \mathbb{R}^{d_Z}$ or that $q$ is affine. Similarly, we can show that $r$ is affine too by starting with $r(\z + \deltaz) = r(\z) + \deltac_V$.

\begin{corollary}
    If we assume $\thicktilde{\z} = \boldsymbol{\phi}(\z)$, for an affine map $q$, $\mathbf{A = I}$.
\end{corollary}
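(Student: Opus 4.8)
The plan is to treat this as a statement about composing two affine maps and reading off linear parts. Write the affine steering map as $\boldsymbol{\phi}(\z) = \mathbf{A}\z + \mathbf{d}$, and use the previous corollary, which guarantees the decoder is affine, to write $q(\c) = \mathbf{A}_V\c + \mathbf{b}$. The defining compatibility between the two interpretations of the LRH is that applying $\boldsymbol{\phi}$ in $\Z$ reproduces the additive concept shift $\deltac$ in $\C$: for a fixed perturbation (so that $\deltac$ does not depend on $\c$) and for every $\c \in \C$ we must have $\boldsymbol{\phi}(q(\c)) = q(\c + \deltac)$, since both sides equal the perturbed embedding $\tilde\z$.

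First I would expand both sides. The left-hand side is $\mathbf{A}\mathbf{A}_V\c + \mathbf{A}\mathbf{b} + \mathbf{d}$ and the right-hand side is $\mathbf{A}_V\c + \mathbf{A}_V\deltac + \mathbf{b}$. To isolate the linear part I would evaluate the identity at two concept values $\c_1,\c_2 \in \C$ and subtract, which cancels every $\c$-independent term (both the $\mathbf{A}_V\deltac + \mathbf{b}$ on the right, since $\deltac$ is fixed, and the $\mathbf{A}\mathbf{b}+\mathbf{d}$ on the left). This yields $(\mathbf{A} - \mathbf{I})\mathbf{A}_V(\c_1 - \c_2) = \mathbf{0}$ for all $\c_1,\c_2 \in \C$. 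Invoking \cref{ass:suff_var}, the concept differences span $\mathbb{R}^{|V|}$, so $\mathbf{A}_V(\c_1-\c_2)$ ranges over all of $\mathrm{Im}(\mathbf{A}_V)$; hence $\mathbf{A} - \mathbf{I}$ annihilates $\mathrm{Im}(\mathbf{A}_V)$, i.e. $\mathbf{A} = \mathbf{I}$ on the subspace where the embeddings actually live.

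With $\mathbf{A}\mathbf{A}_V = \mathbf{A}_V$ established, I would substitute back to confirm the two interpretations coincide: for any $\z = \mathbf{A}_V\c + \mathbf{b}$ we get $\boldsymbol{\phi}(\z) = \mathbf{A}_V\c + \mathbf{A}\mathbf{b} + \mathbf{d} = \z + \mathbf{A}_V\deltac = \z + \deltaz$, so $\boldsymbol{\phi}$ collapses to a pure translation by $\deltaz$ on the image of $q$. This recovers the interpretation $\tilde\z = \z + \deltaz$ of the first corollary, which is exactly the point of the appendix: the ``affine-map'' and ``additive-shift'' readings of the LRH are equivalent once $q$ is affine.

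The main obstacle is a domain subtlety rather than any hard computation: the argument only pins down $\mathbf{A}$ on $\mathrm{Im}(\mathbf{A}_V)$, because nothing in the compatibility condition constrains $\boldsymbol{\phi}$ off the support of the embeddings — precisely the phenomenon flagged in the footnote to \cref{prop:linear_ident}. To state ``$\mathbf{A} = \mathbf{I}$'' as a genuine full-rank identity one must either additionally assume the embeddings span $\mathbb{R}^{d_z}$ (equivalently $\mathbf{A}_V$ is surjective, so $|V| = d_z$) or restrict the claim to the relevant subspace; I would present the result with this caveat so that the scaling/off-image ambiguity is stated honestly.
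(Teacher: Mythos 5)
Your proof is correct and takes essentially the same route as the paper's: write $q$ and $\boldsymbol{\phi}$ in affine form, compose them, and match the $\c$-dependent and constant parts to conclude that the linear part of $\boldsymbol{\phi}$ is the identity and that the translation is $\deltaz = \mathbf{A}_V\deltac$. Your closing caveat—that matching linear parts only forces $\mathbf{A} = \mathbf{I}$ on $\mathrm{Im}(\mathbf{A}_V)$, so the full-rank claim needs $\mathbf{A}_V$ surjective or a restriction to the embedding subspace—is a point the paper's proof glosses over (it implicitly treats $\mathbf{A}_V$ as a square injective, hence invertible, matrix in this appendix), so your version is the more carefully stated one.
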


\textbf{Proof}: Let's assume the affine form of $q$ can be expressed as: 

\begin{flalign}
    \z = \mathbf{A}_V\mathbf{c} + \mathbf{b}
    \label{eqn:q_aff}
\end{flalign}
where $\mathbf{A}_V \in \mathbb{R}^{d_Z \times d_Z}$ and $\mathbf{k} \in R^{d_Z}$.

Similarly, $\thicktilde \z = q(\thicktilde{\mathbf{c}}) = \mathbf{A}_V\thicktilde{\mathbf{c}} + \mathbf{b}$ and we know $\tilde \c = \c + \deltac_V$. 
% can't start with injectivity here since we don't know if a generic delta_z maps onto the space of varying concepts
$$\implies \thicktilde \z = \mathbf{A}_V(\c + \deltac_V) + \mathbf{b}$$

we have $\thicktilde{\mathbf{z}} = \boldsymbol{\phi}(\mathbf{z})$  and from \cref{eqn:q_aff}:
\begin{flalign}  
    \boldsymbol{\phi}(\mathbf{A}_V\mathbf{c} + \mathbf{b}) = \mathbf{A}_V(\c + \deltac_V) + \mathbf{b}
\end{flalign}

In the above equation, we can see that the maximum degree of $\c$ on the RHS is $1$, which implies that the degree of $\c$ on the LHS should also at most be $1$, which implies $\boldsymbol{\phi}$ can at most be an affine function.

So let's assume $\boldsymbol{\phi}$ is an affine function of the form:

\begin{flalign}
    \thicktilde{\mathbf{z}} = \boldsymbol{\phi}(\mathbf{z}) = \mathbf{T}\mathbf{z} + \deltaz
\end{flalign}

where $\mathbf{T} \in \mathbb{R}^{d_Z \times d_Z}$ and $
\deltaz \in \mathbb{R}^{d_Z}$. Substituting this in the above equation, we get:

\begin{flalign}
    \mathbf{T}(\A_V \c + \mathbf{b}) + \deltaz = \mathbf{A}_V(\c + \deltac_V) + \mathbf{b}
\end{flalign}
$$\mathbf{Q(T - I)c} + (\mathbf{T - I)b} + (\deltaz \mathbf{ - Q} \deltac_V) = 0$$

For a non-trivial solution:
\begin{flalign}
    \mathbf{T = I} \\
    \deltaz = \mathbf{A}_V \deltac_V
\end{flalign}

So, we have proved that if we assume $q$ to be affine, then $\thicktilde{\mathbf{z}} = \mathbf{z} + \delta \mathbf{z}$.

    \textbf{Implications}: Multiple expositions \citep{templeton2024scaling} remark that it it not clear what the meaning of \textit{linear} exactly  is in the linear representation hypothesis. Informally, many results cited in support of the linear representation hypothesis either extract information with a linear probe, or add a vector to influence model behavior. Here, we assume that if linear meant concepts are linearly encoded in the latent space, we can show that this would correspond to shifts in the latent space representing net concept changes and vice versa, which means both interpretations are the same, so it does not matter which one is assumed. 

\subsection{Parallels with Sparse Coding and Addressing the Amortisation Gap}
\label{apx:dict-vs-ssae}

A long line of work on sparse coding and dictionary learning provides conditions under which a ground-truth dictionary $D$ is identifiable from observations, typically through the recovery of instance-specific sparse codes. Foundational contributions establish that identifiability follows from \emph{geometric constraints} on the dictionary $D$, most prominently the \emph{restricted isometry property} (RIP) \citep{candes2005decodinglinearprogramming, candes2005stablesignalrecoveryincomplete, baraniuk2008simple, foucart2013invitation} and various notions of \emph{incoherence} \citep{1614066, donoho2003optimally, gribonval2004sparse, 1337101}. Intuitively, these assumptions ensure that any sufficiently sparse combination of dictionary atoms is well-conditioned, which in turn guarantees the uniqueness of the sparse representation. These geometric constraints rule out correlated dictionary columns: under incoherence or RIP, the dictionary behaves approximately like an orthonormal basis when restricted to sparse supports.

Subsequent work derives identifiability guarantees for overcomplete dictionaries \citep{spielman2012exactrecoverysparselyuseddictionaries, agarwal2014learningsparselyusedovercomplete, gribonval2015sample, schnass2015localidentificationovercompletedictionaries}. These analyses again rely on variants of independence-of-support conditions, minimum separation between dictionary atoms, or distributional sparsity assumptions. Although theoretically powerful, these conditions are poorly aligned with the empirical structure of concept vectors in LLMs, where underlying factors of variation are often highly correlated. For instance, concepts such as truthfulness and harmlessness in alignment studies, or the deliberately correlated factors in our \textsc{corr}(2,1) benchmark, violate the incoherence and RIP assumptions that classical dictionary-learning theorems require. This mismatch partly explains the brittleness of standard sparse autoencoders in mechanistic interpretability settings where concept-level correlations are intrinsic rather than pathological.

Sparse shift autoencoders (SSAEs) offer a complementary route to identifiability that circumvents these geometric constraints. Instead of imposing global structure on $D$, SSAEs exploit assumptions about the \emph{data-generating process}, which we have shown to cover a wide spectrum of observed samples, such as even pairing any two text snippets by uniformly sampling from existing datasets. The paired samples do not need to vary in a single concept, or be subject to any rejection sampling. Consequently, SSAEs can recover correlated concept directions in regimes where dictionary-learning guarantees do not apply.

A further conceptual distinction arises from \emph{amortisation}. Classical sparse coding performs instance-wise inference by solving, for each input~$\boldsymbol{x}$,
\begin{equation}
\label{eq:sc}
\boldsymbol{z}^\star(\boldsymbol{x})
= \arg\min_{\boldsymbol{z}} 
\left\{
\frac12 \lVert \boldsymbol{x} - D \boldsymbol{z} \rVert_2^2
+ \lambda \lVert \boldsymbol{z} \rVert_1
\right\},
\end{equation}
and the identifiability guarantees focus on conditions under which $D$ and the sparse codes are uniquely recoverable. In contrast, sparse autoencoders learn an \emph{amortised inference map} $r_\theta(\boldsymbol{x})$ such that $r_\theta(\boldsymbol{x}) \approx \boldsymbol{z}^\star(\boldsymbol{x})$ for all~$\boldsymbol{x}$ in the data distribution. 

Recent analysis by \citet{oneill2025computeoptimalinferenceprovable} demonstrates that amortisation introduces systematic inductive biases absent from classical sparse coding. Their results show that the simple linear-nonlinear architecture of standard SAE encoders cannot implement the optimal sparse inference operator, even in regimes where the underlying dictionary is fully recoverable and exact inference is theoretically tractable. From a compressed-sensing perspective, this architectural bottleneck gives rise to a \emph{provable amortisation gap}: amortised encoders realise only a restricted family of piecewise-linear thresholding rules whose geometry is determined jointly by network structure and activation statistics, rather than by the true sparse-coding objective. Consequently, amortised inference may appear robust when $D$ violates incoherence or RIP, yet still entangle correlated factors that classical instance-wise optimisation would separate. This insight ties directly to observed superposition phenomena in SAEs and clarifies why amortised encoders deviate from the solution map~$\boldsymbol{x}\mapsto\boldsymbol{z}^\star(\boldsymbol{x})$ predicted by sparse-coding theory. Furthermore, by decoupling encoding and decoding, \citet{oneill2025computeoptimalinferenceprovable} show that modestly more expressive inference architectures significantly improve sparse-code recovery and yield more faithful features in LLM activations. These findings reinforce that amortisation is not a neutral approximation step, but a central determinant of the representational and identifiability properties of SAEs.

In the SSAE framework, amortisation becomes effective precisely because multi-concepts shifts induce changes of different levels of sparsity in the latent codes, such that amortising over them enables recovery of single concept shifts. Rather than requiring RIP or incoherence conditions on $D$, the encoder learns to approximate the update map induced by these shifts, enabling it to recover individual concept directions even when they are correlated. As a result, the identifiability guarantees arise from assumptions on the sparsity of latent concept shifts rather than from geometric properties of the dictionary.

% \textbf{This implies that $\delta C$ is identified by $\hat{\mathbf{s}}^{-1}(\delta \mathbf{z})$ up to permutation and elementwise linear transformations.}

% if in general, q/r is non-linear: then we don't have id results. so what are people with sae setups doing? they are using a non-linear r cuz that is the same mapping as ours from C to Z and they are using non-paired observations for this---dig a bit into this since in seb's paper they only show a counter-example to say additional assumptions are needed but can do ~ with one more assumption or:
% can we prove this for independent latents? would this mean that this way they can only extract independent concepts? 

\section{Implementation and experimental details}
\label{apx:imp}

Two key aspects of enforcing sparsity of the learnt representation are: (i) using hard constraints rather than penalty tuning, which helps address concerns with $\ell_1$-based regularization (e.g., feature suppression \citep{anders_etal_2024_composedtoymodels_2d}) and (ii) appropriate normalisation. For the former, we use the \texttt{cooper} library \citep{gallegoPosada2022cooper}. For the latter, we implement layer normalization \citep{ba2016layer} after the encoder and column normalization in the decoder at each step \citep{bricken2023monosemanticity, gao2024scaling}. To tune the model's hyperparameters in an unsupervised way, we use the Unsupervised Diversity Ranking (UDR) score \citep{duan2019unsupervised}, and test the model's sensitivity on key parameters (such as the sparsity level 
$\beta$ and learning rate).

\subsection{\isae Architecture}

The encoding $r: \Z \rightarrow \C$ and decoding functions $q: \C \rightarrow \Z$ constituting the \isae autoencoding framework are parameterized as follows:

\begin{flalign}
% \begin{split} \nonumber
     \hatdeltac_V \coloneqq {r}(\deltaz) & \coloneqq \mathbf{W}_e (\deltaz - \mathbf{b}_d) + \mathbf{b}_e;
     \label{eqn:enc}\\
    \hatdeltaz \coloneqq {q}(\hatdeltac_V) & \coloneqq \mathbf{W}_d \hatdeltac_V + \mathbf{b}_d\,.  \label{eqn:dec}
% \end{split}
\end{flalign}

\textbf{Parameters}.  $\mathbf{W}_e \in \mathbb{R}^{|V| \times d_z}, \mathbf{b}_e \in \mathbb{R}^{|V|}, \mathbf{W}_d \in \mathbb{R}^{d_z \times |V|}$, and $\mathbf{b}_d \in \mathbb{R}^{d_z}$ denote the encoder weights, encoder bias, decoding weights, and decoder bias respectively. The decoder bias is also treated as a pre-encoder bias purely for empirical performance improvement reasons based on ongoing discourse on engineering improvements in SAEs \citep{bricken2023monosemanticity, gao2024scaling}. The encoder and decoder weights are initialised s.t. $\mathbf{W}_d = \mathbf{W}_e^T$. The bias terms $\mathbf{b}_e$ and $\mathbf{b}_d$ are initialised to be all zero vectors. Further, after every iteration, the columns of $\mathbf{W}_d$ are unit normalised following \citet{bricken2023monosemanticity, gao2024scaling}.

\textbf{Data}. Data is layer-normalised analogous to \citet{gao2024scaling} prior to being passed as input to the encoder in batch sizes of $32$.

\textbf{Optimization.} Specifically, the following objective is optimized:
\begin{flalign}
    \min \frac{1}{N} \sum_{i = 1}^N \frac{||\deltaz_{(i)} - q(r(\deltaz_{(i)}))||^2_2}{||\deltaz_{(i)}||^2_2}, \\
    \text{s.t.} \frac{1}{|V|N} \sum_{i = 1}^N || r(\deltaz_{(i)})||_1 \leq \beta
\end{flalign}

We optimize the above constrained minimisation problem by computing its Lagrangian and the primal and dual gradients using the \texttt{cooper} library \citep{gallegoPosada2022cooper}. We use ExtraAdam \citep{gidel2020variationalinequalityperspectivegenerative} as both the primal and the dual optimizer, with the values of the primal and dual learning rates fixed throughout training and selected based on UDR scores (see \cref{apx:udr}). ExtraAdam uses \textit{extrapolation from the past} to provide similar convergence properties as extra-gradient optimizers \citep{korpelevich1976extragradient} without requiring twice as many gradient computations per parameter update or auxiliary storage of trainable parameters \citep{gidel2020variationalinequalityperspectivegenerative, gallegoPosada2022cooper}. Further, to account for the unit-norm adjustment of the columns of the decoder weights $\mathbf{W}_d$, we adjust gradients to remove discrepancies between the true gradients and the ones used by the optimizer. This done by removing any gradient information parallel to the columns of $\mathbf{W}_d$ at every step after the normalisation of the columns of $\mathbf{W}_d$. 

\textbf{Compute.} All experiments were conducted on the A100 GPUs (average time of 5min to 45 mins depending on the dataset). 

\subsubsection{Model Selection via Unsupervised Diversity Ranking (UDR)}
\label{apx:udr}

\begin{figure}
\centering
\includegraphics[scale=0.2]{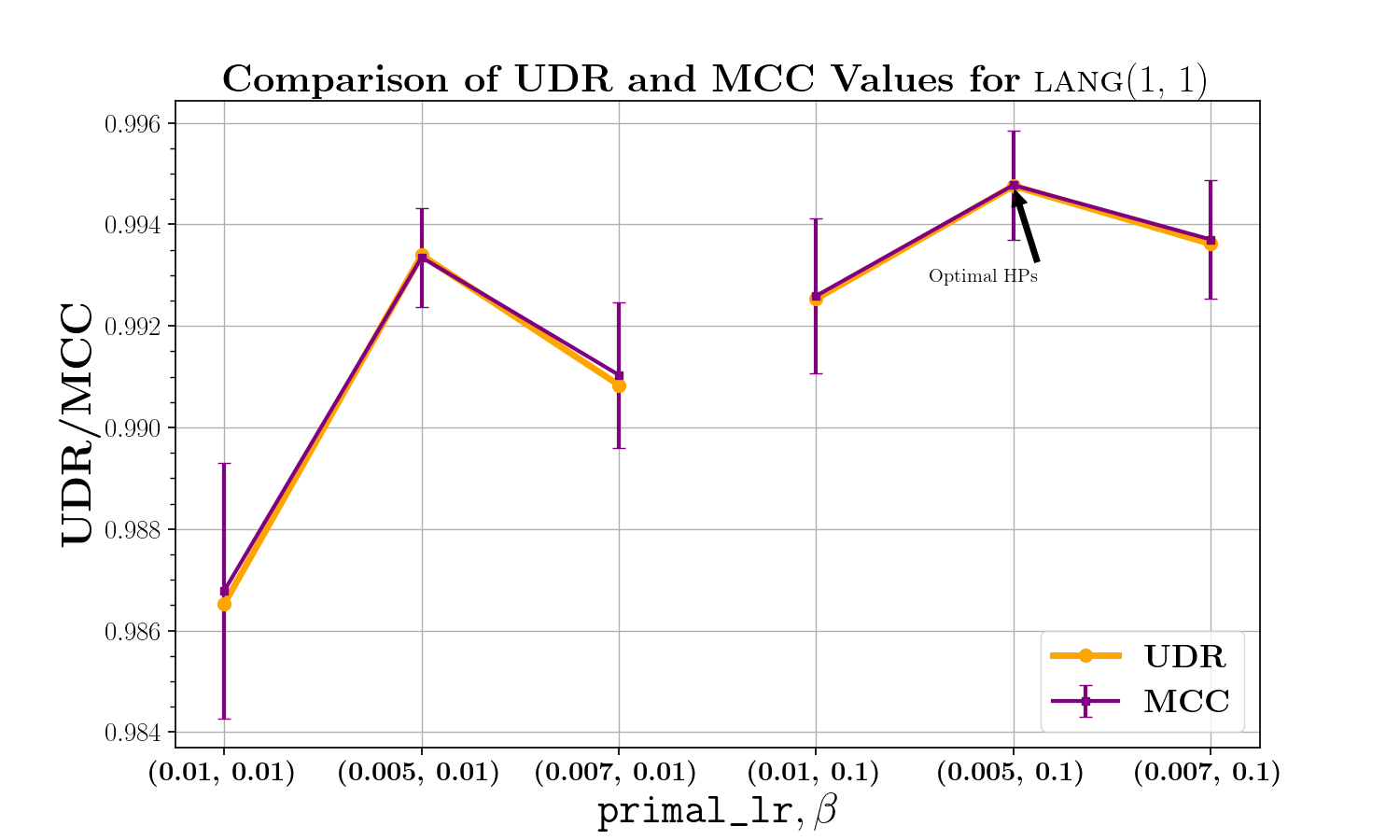}
\caption{UDR scores suggest a \texttt{primal\_lr} value of $0.005$ and a $\beta$ value of 0.1.}
\label{fig:udr_bin1}
\end{figure}
\begin{figure}    
\centering
\includegraphics[scale=0.18]{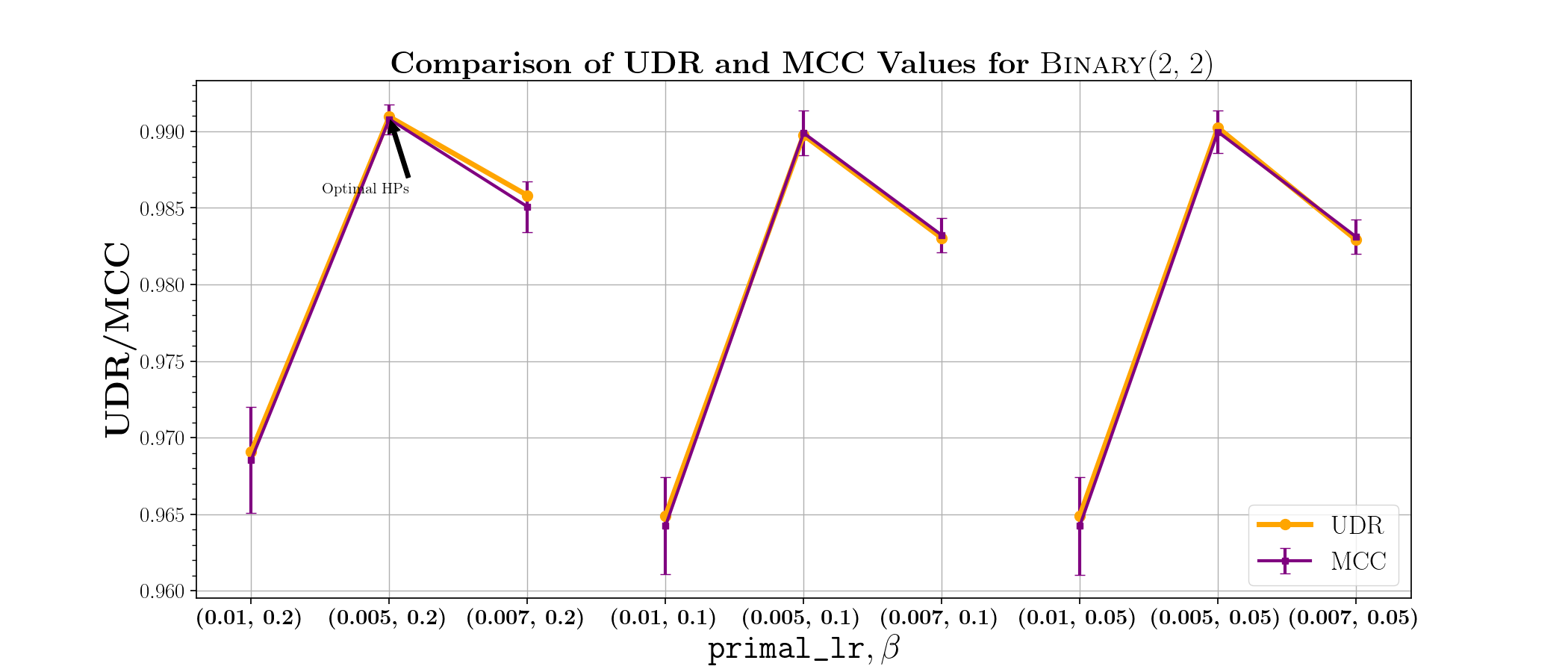}
    % \centering
    % \includegraphics[width=0.5\linewidth]{figs/udr_bin2.png}
    % \caption{An example of observed UDR scores and MCC values for selecting optimal values of the two most important hyperparameters in tuning.}
    % \label{fig:udr}
    \caption{UDR scores suggest a \texttt{primal\_lr} value of $0.005$ and a $\beta$ value of $0.2$.}
    \label{fig:udr_bin2}
\end{figure}

Unsupervised model selection remains a notoriously difficult problem since there appears to be no unsupervised way of distinguishing between bad and good random seeds; unsupervised model selection should not depend on ground truth labels since these might biased the results based on supervised metrics. Moreover, in disentanglement settings, hyperparameter selection cannot rely solely on choosing the best validation-set performance. This is because there is typically a trade-off between the quality of fit and the degree of disentanglement (\citep{locatello2019challengingcommonassumptionsunsupervised}, Sec 5.4). For the proposed method in \cref{sec:wscrl}, identifiability of the decoder and of the learnt representation is essential to recover steering vectors for individual concepts. It is possible that a decoder with higher reconstruction error is identified to a greater degree. Hence, it is not sufficient to engineer a good unsupervised model solely based on how well it minimizes the reconstruction loss. \citet{duan2019unsupervised} propose the Unsupervised Disentanglement Ranking (UDR) score \citep{duan2019unsupervised}, which measures the consistency of the model across different initial weight configurations (seeds), which we use to fit our model. It is calculated as follows: for every hyperparameter setting, we compute MCCs between pairs of different runs and compute the median of all pairwise MCCs as the UDR score. We report the UDR scores and the mean pair-wise MCCs for the two most important hyperparameters affecting observed reconstruction error and MCC values---the learning rate of the primal optimizer (\texttt{primal\_lr}) and the sparsity level ($\beta$)---over $10$ pairs of $5$ random seeds in \cref{fig:udr_bin1} for the dataset, \textsc{lang}$(1, 1)$, and in \cref{fig:udr_bin2} for \textsc{binary}$(2, 2)$, over a selected hyperparameter range corresponding to decent reconstruction error. At slightly different hyperparameter settings, reconstruction error may spike even if the MCC remains acceptable. Such scenarios often fall outside the scope of consideration here, as they break the assumption of near-perfect reconstruction. While models may not achieve zero reconstruction loss in practice, we still expect it to remain reasonably low. As can be seen in \cref{fig:udr_bin1} and \cref{fig:udr_bin2}, MCC values typically correlate with the UDR scores.  Note that: \cref{fig:udr_bin1} and \cref{fig:udr_bin2} show UDR scores for only two datasets, but the same strategy (without plotting) was employed to select optimal hyperparameters for all datasets. Further, using these different models, we perform a sensitivity analysis on the two most important hyperparameters of our model---the sparsity level $\epsilon$ and the learning rate, which we report in \cref{apx:sens}.

\subsubsection{Sparse optimization}
\label{app:sparseopt}
% \sj{todo: add details about comparison with TopK}
We choose to enforce sparsity in the learning objective of the model as an explicit constraint rather than as $l_1$-regularisation due to the benefits listed in \cref{tab:sparsityreg}. In areas such as compressive sensing, signal processing, and certain machine learning applications, constrained optimization approaches have shown superior performance in recovering sparse signals and providing better generalization performance.

\begin{table}[ht]
\centering
\begin{tabularx}{\textwidth}{|X|X|X|}
   \hline 
   & \textbf{Constrained optimization} & \textbf{$\ell_1$-regularisation} \\
   \hline
   \textit{Optimization efficiency} & Finding the optimal solution and enforcing sparsity are separate tasks. Methods like augmented Lagrangian formulations iteratively enforce sparsity while optimizing the objective function, which can lead to more stable convergence. & The $l_1$ penalty introduces a non-differentiable point at zero, which requires careful tuning and can be sensitive to initialization and hyperparameters. \\
\hline
\textit{Hyperparameter tuning} & The primary hyperparameter is the sparsity level $\epsilon$, which can be set based on domain knowledge or practical constraints, simplifying the model selection process. & The primary hyperparameter is the strength of the sparsity penalty in the training objctive $\lambda$, which needs tuning to prevent under or over-fitting.\\
\hline
\textit{Interpretability and control} & We have precise control on the sparsity of the solution since the relationship between $\epsilon$ and solution sparsity is direct. The solution is easier to interpret.  &  The relationship between $\lambda$ and the resulting solution sparsity is complex and non-linear and a small change in the value of $\lambda$ can lead to very large solution changes, making it difficult to control or interpret.\\
\hline
\end{tabularx}
\caption{Benefits of constrained optimization over regularisation for enforcing sparsity.}
\label{tab:sparsityreg}
\end{table}

\subsubsection{Datasets}
\label{apx:data}

We list out data generation pipelines for the semi-synthetic datasets in \cref{data:binsynth} All datasets are summarised in \cref{tab:datasets}. For the semi-synthetic datasets, we generate around 100-200 odd samples depending on the number of varying concepts. 

% and refer the reader to \citep{lin2022truthfulqameasuringmodelsmimic} and the corresponding Hugging Face repository for details on the multiple-choice subset of TruthfulQA considered in this paper. 

\begin{table}[h!]
    % \begin{wraptable}{r}{0.55\textwidth}
    \centering
    \renewcommand{\arraystretch}{1.25}
    \caption{Datasets comprise of paired observations $(\x, \tilde \x)$ where $\x$ and $\tilde \x$ vary in concepts $V = \{c_1, c_2, ..., c_{|V|}\}$ across all pairs, such that for any given pair, the maximum number of varying concepts is max($|S|$). \textit{Nomenclature for semi-synthetic datasets follows the rule: identifier of the dataset indicating why we consider it, followed by $|V|$ and max$(|S|)$: \textsc{identifier}($|V|$, max$|S|$)}.}
    \label{tab:datasets}
    \footnotesize{
    \begin{tabular}{c  c  c}
    \toprule
\textbf{Dataset} & $|V|$ & max($|S|$) \\
        \midrule
        \textsc{lang}($1, 1$) & 1 & 1 \\ \hdashline
\textsc{gender}$(1, 1)$
        % \textsc{gender}($1, 1$)\tablefootnote{We do not believe that gender is a binary concept and only consider it since it is common practice to show steering on binarily gendered contexts, and it helps to compare values of cosine similarities obtained.} 
        & 1 & 1 \\ \hdashline
        \textsc{binary}($2, 2$) & 2 & 2 \\ \hdashline
        \textsc{correlated}($2, 1$) & 2 & 1 \\ 
        % \hdashline
        % \textsc{cat}($135, 3$)\tablefootnote{This is in line with contemporary literature \citep{park2024geometrycategoricalhierarchicalconcepts, reber2024rate} but is slightly misleading since the concepts are represented as binary contrasts, please refer to \cref{apx:data} for a detailed discussion.} & 135 & 3 \\ 
        \midrule
        TruthfulQA  & 1 & 1 \\ 
        \bottomrule
    \end{tabular}
    }
% \label{tab:datasets}
% \end{wraptable}
\end{table}

\begin{figure}
\begin{subbox}
\textsc{lang}($1, 1$) 
\vspace{1mm}
% \rule{0.75\linewidth}{0.5pt}

Generate pairs of text samples varying only in their language, within a pair and having the same type of variation in language across all pairs. Choosing \textit{eng} $\rightarrow$ \textit{french} as the variation in the concept of \textit{language}, so as to learn the steering vector \textit{eng} $\rightarrow$ \textit{french}, we generate pairs of words describing common \textit{household objects}, such as: 

\begin{lstlisting}
[("Door", "Porte"),("Dog", "Chien"), ("Shirt", "Chemise"),("fish", "poisson"),("Pillow", "Oreiller"),("Blanket", "Couverture"),("Sunday", "Dimanche"),("Hat", "Chapeau"),("Umbrella", "Parapluie"),("Glasses", "Lunettes"), ("Clock", "Horloge"),...]
\end{lstlisting}

\rule{\linewidth}{0.5pt}\\

\textsc{gender}($1, 1$) 
\vspace{1mm}

Generate pairs of text samples varying only in gender within a pair and having the same type of variation in gender across all pairs. Choosing \textit{masculine} $\rightarrow$ \textit{feminine} as the variation in the concept of \textit{gender}, so as to learn the steering vector \textit{masculine} $\rightarrow$ \textit{feminine}, we generate pairs of words describing common \textit{professions}, such as: 
\begin{lstlisting}
    [("grandpa", "grandma"), ("grandson", "granddaughter"), ("groom", "bride"), ("he", "she"), ("headmaster", "headmistress"), ("heir", "heiress"), ("hero", "heroine"), ("husband", "wife"), ("king", "queen"), ("lion", "lioness"), ("man", "woman"), ("manager", "manageress"), ("men", "women"),...]
\end{lstlisting}

\rule{\linewidth}{0.5pt}\\

\textsc{binary}($2, 2$) 
\vspace{1mm}

Generate pairs of text samples varying in \textit{gender} and \textit{language} such that it is not known if which of the two, or both, vary within any pair. Choosing \textit{masculine} $\rightarrow$ \textit{feminine} as the variation in the concept of \textit{gender} and \textit{eng} $\rightarrow$ \textit{french} as the variation in the concept of \textit{language}, so as to learn the steering vectors for \textit{masculine} $\rightarrow$ \textit{feminine} and \textit{eng} $\rightarrow$ \textit{french}, we generate pairs of words describing common \textit{professions}, such as: 
\begin{lstlisting}
    [("brother", "sister"), ("buck", "doe"), ("bull", "cow"),
    ("daddy", "mommy"), ("fils", "fille"), ("homme", "femme"), ("mari", "femme"), ("acteur", "actrice"), ("Duc", "Duchess"), ("Widow", "Veuf"), ("Taureau", "Cow"), ("Hen", "Coq"),...]
\end{lstlisting}

Here, we generate an equal number of samples with only \textit{masculine} $\rightarrow$ \textit{feminine}, only \textit{eng} $\rightarrow$ \textit{french}, and both \textit{masculine} $\rightarrow$ \textit{feminine} and \textit{eng} $\rightarrow$ \textit{french} variations.

\rule{\linewidth}{0.5pt}\\

\textsc{corr}($2, 1$) 
\vspace{1mm}

Generate pairs of text samples varying only in language within a pair but having two different types of variation in language across all pairs. Choosing \textit{eng} $\rightarrow$ \textit{french} and \textit{eng} $\rightarrow$ \textit{german} as the two types of variations in the concept of \textit{language}, so as to learn the steering vector \textit{eng} $\rightarrow$ \textit{french}, we generate pairs of words describing common \textit{professions}, such as: 
\begin{lstlisting}
    [("Doctor", "arzt"), ("Lehrer", "teacher"), ("Engineer", "Ingenieur"), ("Pflegefachkraft", "Nurse"), ("headmaster", "headmistress"), ("Teacher", "Enseignant"), ("Infirmier", "Nurse"), ("Koch", "Chef"),...]
\end{lstlisting}

Generate an equal number of pairs for each variation \textit{eng} $\rightarrow$ \textit{german} and \textit{eng} $\rightarrow$ \textit{french} with correlated pairs.

\rule{\linewidth}{0.5pt}\\
\end{subbox}
\caption{Data generation pipeline for semi-synthetic language datasets considering binary contrasts in underlying concepts from a potentially higher-level concept consisting of several such binary contrasts.}
\label{data:binsynth}
\end{figure}

% \begin{figure}
% \begin{subbox}
% \textsc{cat}($135, 3$) 
% \vspace{1mm}

% Consider a codebook of precisely defined attributes for three categorical variables: shape, color, and object.
% \begin{lstlisting}
% CATEGORICAL_CODEBOOK = {
%     "shape": ["spherical", "cuboidal", "conical", "circular", "squarish", "toroidal", "pyramidal", "cylindrical", "prismatic", "hemispherical"], #10
    
%     "color": ["red", "blue", "green", "yellow", "orange", "purple", "pink", "cyan", "teal", "lavender"], #10
    
%     "object": ["button", "shoe", "mug", "vase", "bead", "cushion", "toy", "statue", "drawing", "window"], #10
%     }
% \end{lstlisting}
% Sample pairs phrases combining one attribute from each variable such that an equal number of pairs have only one variable (e.g., shape) undergoing change (e.g., \textit{squarish} $\rightarrow$ \textit{prismatic}), two variables undergoing change (such as the pair \texttt{spherical green mug} and \texttt{pyramidal pink mug}), or all three of them changing. Since under the LRH we assume to learn concepts as binary contrasts, we aim to discover $3 \times \binom{10}{2} = 135$ different steering vectors.
% \end{subbox}
% \caption{Data generation pipeline for a significantly harder dataset than the ones in \cref{data:binsynth} considering explicitly categorical variables and their attributes to learn steering vectors for corresponding binary contrasts.}
% \label{data:cat}
% \end{figure}

\subsubsection{Reconstruction Errors}
\label{apx:recon}

Here, we report the reconstruction error of the sparse shift autoencoders; because SSAEs impose sparsity as an explicit constraint rather than via regularisation, the decoder typically has sufficient flexibility to achieve near-zero reconstruction error on all datasets and LLM families considered in the paper. Consequently, these values serve mostly as a sanity check: they verify that the model has not collapsed and that the optimisation respects the reconstruction constraint, but they are not themselves informative about the latent structure. We nevertheless include them for completeness and to confirm that all models operate in the regime where reconstruction error is effectively zero.

\begin{figure}
    \centering
    \includegraphics[width=\linewidth]{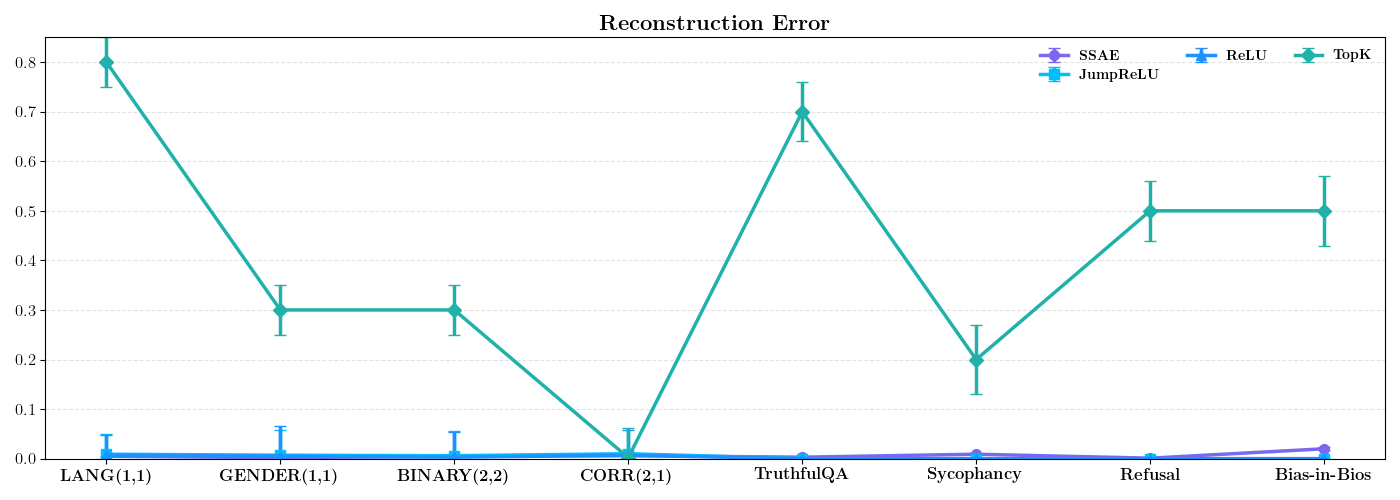}
    \caption{The obtained reconstruction error is near zero for all models except Topk SAEs.}
    \label{fig:placeholder}
\end{figure}
% \begin{table}[t]
% \centering
% \begin{tabular}{l c c c c c}
% \toprule
% \textbf{Dataset} 
% &\textsc{lang}($1, 1$) & \textsc{gender}($1, 1$) & \textsc{binary}($2, 2$) & \textsc{corr}($2, 1$) & TruthfulQA \\
% \midrule
% \textbf{Reconstruction Error} 
% & $0.035 \pm 0.007$ &  $0.003 \pm 0.000$ &  $0.000 \pm 0.005$ &  $0.001 \pm 0.000$ &  $0.000 \pm 0.000$ \\
% \bottomrule
% \end{tabular}
% \caption{Llama-3.1-8B activations.}
% \label{tab:recon-error}
% \end{table}

% \begin{table}[t]
% \centering
% \begin{tabular}{l c c c c c}
% \toprule
% \textbf{Dataset} 
% &\textsc{sycophancy}& \textsc{refusal} & \textsc{bias in bios}\\
% \midrule
% \textbf{Reconstruction Error} 
% & $0.001 \pm 0.007$ &  $0.000 \pm 0.000$ &  $0.017 \pm 0.000$\\
% \bottomrule
% \end{tabular}
% \caption{Pythia-70m activations.}
% \label{tab:recon-error}
% \end{table}

% \begin{table}[t]
% \centering
% \begin{tabular}{l c c c}
% \toprule
% \textbf{Dataset} 
% &\textsc{sycophancy}& \textsc{refusal} & \textsc{bias in bios}\\
% \midrule
% \textbf{Reconstruction Error} 
% & $0.000 \pm 0.001$ &  $0.000 \pm 0.000$ &  $0.010 \pm 0.005$\\
% \bottomrule
% \end{tabular}
% \caption{Gemma-2B activations.}
% \label{tab:recon-error}
% \end{table}

\subsection{SAE Baselines}
\textbf{ReLU SAE}. We implement the standard sparse autoencoder architecture
  \citep{bricken2023monosemanticity} with ReLU activations in the encoder.
  The encoder computes $\mathbf{c} = \text{ReLU}(\mathbf{W}_e (\mathbf{z} -
  \mathbf{b}_d) + \mathbf{b}_e)$ and the decoder reconstructs
  $\hat{\mathbf{z}} = \mathbf{W}_d \mathbf{c} + \mathbf{b}_d$, where decoder
  columns are periodically renormalized to unit norm. Sparsity is encouraged
  via L1 regularization. We use $\gamma = 10^{-6}$, AdamW
   optimizer with learning rate $5 \times 10^{-4}$, cosine learning rate
  schedule with 1000 warmup iterations, and gradient clipping at 1.0.

\textbf{TopK SAE}. Following \citet{gao2024scaling}, we implement a TopK SAE that
  enforces structural sparsity by retaining only the $k$ largest encoder
  activations per sample. Specifically, after computing pre-activations
  $\mathbf{a} = \mathbf{W}_e (\mathbf{z} - \mathbf{b}_d) + \mathbf{b}_e$, we
  set $\mathbf{c} = \text{ReLU}(\text{TopK}(\mathbf{a}, k))$ where TopK zeros
   out all but the $k$ largest values. Since sparsity is structurally
  enforced, no regularization penalty is needed. We use $k =
  32$ for all experiments, with the same optimizer settings as the ReLU SAE.

\textbf{JumpReLU SAE.} Following \citet{rajamanoharan2024improvingdictionarylearninggated}, we implement a
  JumpReLU SAE with learnable per-feature thresholds. The learnable parameters of the activation function
   are initialized to $10^{-3}$. Gradients through the discontinuous step function
   are approximated using a rectangular kernel with bandwidth $10^{-3}$.
  Sparsity is encouraged via an $\ell_0$ penalty
  with penalty coefficient $\gamma = 10^{-6}$.

\subsection{Metrics}
\label{apx:metrics}
Below we discuss the two metrics used in the paper for evaluation.
\subsubsection{Mean Correlation Coefficient: Gateway to Interpreting Latent Dimensions}
\label{apx:mcc}

In modern work on identifiable representation learning, the Mean Correlation Coefficient (MCC) was proposed to be used as a metric by \citet{hyvarinen2016unsupervisedfeatureextractiontimecontrastive} to evaluate the recovery of true source signals through their estimates. It was further developed as a metric by \citet{khemakhem2020icebeemidentifiableconditionalenergybased} to measure on an average how well the elements of two vectors $\x \in \mathbb{R}^n$ and $\mathbf{y} \in \mathbb{R}^n$ are correlated under the best possible alignment of their ordering, i.e., MCC measures the average maximum correlation that can be achieved when each variable $x_i$ from $\x$ is paired with a variable $y_j$ from $\mathbf{y}$ across all possible permutations of such pairings, i.e, across $(i, \pi(j))$ where $\pi \in S_n$, the set of all permutations of the n indices.

To understand the steps involved in computing this metric, let $\x  = (x_1, x_2)$ and $\mathbf{y} = (y_1, y_2)$ be two bivariate random variables. Then,
\noindent
\begin{itemize}
    \item Append $\mathbf{y}$ to $\x$, treating rows as observations and the columns as variables (i.e. $[x_1, x_2, y_1, y_2 ]$).
    \item Compute absolute values of the Pearson correlation coefficients between $\x$ and $\mathbf{y}$, yielding the following matrix: 
    $\begin{bmatrix}
        \text{abs(corr}(x_1, y_1)) & \text{abs(corr}(x_1, y_2))\\
        \text{abs(corr}(x_2, y_1)) & \text{abs(corr}(x_2, y_2))
    \end{bmatrix}$.
    \item Next, solve the linear sum assignment problem to select the absolute correlation coefficients for pairings between components of $\x$ and $\mathbf{y}$ such that the sum of the selected coefficients is maximised. Operationally, if the pairing is of $x_1$ with $y_1$, this corresponds to a pairing score of $\text{abs(corr}(x_1, y_1)) + \text{abs(corr}(x_2, y_2))$. The only other possible pairing in this case would have a score of $\text{abs(corr}(x_1, y_1)) + \text{abs(corr}(x_2, y_2))$. Select the maximum of the scores of these pairings.
    \item The MCC value then would be the mean of the correlation coefficients of the optimal pairings. For example, if the best pairings are $(x_1, y_1)$ and $(x_2, y_2)$, then MCC would be mean$(\text{abs(corr}(x_1, y_1)), \text{abs(corr}(x_2, y_2)))$.
    % $$\max (\text{abs(corr}(x_1, y_1) \text{abs(corr}(x_1, y_2))$ to select the best pairing for $x_1$ and $\max (\text{abs(corr}(x_2, y_1), \text{abs(corr}(x_2, y_2))$  to select the best pairing for $x_2$ and compute the sum of the correlation coefficients corresponding to these pairings, i.e., $\max (\text{abs(corr}(x_1, y_1), \text{abs(corr}(x_1, y_2)) + \max (\text{abs(corr}(x_2, y_1), \text{abs(corr}(x_2, y_2))$. Say, for eg, the best pairings are $(x_1, y_1)$ and $(x_2, y_2)$. Then this sum would have been $\text{abs(corr}(x_1, y_1)) +  \text{abs(corr}(x_2, y_2))$. 
\end{itemize}

\textbf{Evaluating learnt representations.} When the ground truth latent representation is known, MCC is computed between the ground truth variable and its estimate. When the ground truth is unknown, MCC is computed by comparing pairs of latent representations, where each stems from a different random initialisation of the representation learner. This tests if the model can consistently learn representations within the equivalence class of permutation and scaling.

\textbf{Other metrics.} While MCC measures permutation-identifiability, other metrics such as the coefficient of determination $R^2$ can be used to measure linear identifiabilty by predicting the ground truth latent variables from the learnt latent variables. The average Pearson correlation between the ground truth and the learnt latents would correspond to the coefficient of multiple correlation ($R$). MCC $\leq$  $R\leq R^2$. So measuring MCC values gives us a more conservative estimate for our results. Moreover, MCC allows for other measures of correlations to be considered between the variables, including ones that measure non-linear dependencies such as the Randomised Dependence Coefficient \citep{lopez2013randomized}.

\subsubsection{Cosine Similarity}
\label{apx:cos}
Cosine similarity reflects the geometry of an LLM's latent space in general, thereby acting as a measure of semantic similarity between embeddings. This is because gradient descent often shapes the latent space of an LLM toward a Euclidean-like structure \citep{jiang2024originslinearrepresentationslarge}, despite it being unidentified by standard pre-training objectives \citep{park2023linear}. Further, for the Llama family of models \citep{dubey2024llama3herdmodels}, it has been shown that cosine similarity indeed acts similar to the causal inner product in terms of capturing the semantic structure of embeddings  \citep{park2023linear}. Empirically, cosine similarity is the most common similarity metric for comparing embeddings.

\subsection{Identifiability and Steering Accuracy with Pythia}
\label{apx:empirical-pythia}
\begin{table*}[ht]
\centering
\caption{
Mean MCC of recovered decoder directions evaluated on paired observations $(f(\vx), f(\tilde{\vx}))$. 
Across synthetic datasets, MCC values remain high and stable, indicating reliable recovery up to permutation and scaling.
Performance degrades gracefully as structure becomes harder, with the largest separation appearing when latent concepts are correlated, as in \textsc{corr}$(2,1)$.
}
\label{tab:mcc_lang_pythia}
\small
\setlength{\tabcolsep}{5pt}
\renewcommand{\arraystretch}{1.15}

\begin{tabular}{lcccccc}
\toprule
\textbf{Dataset} 
& \textbf{SSAE} 
& \textbf{PythiaSAE} 
& \textbf{TopK-SAE} 
& \textbf{ReLU-SAE} 
& \textbf{JumpReLU SAE}
& \textbf{Linear Probe} \\
\midrule
\textsc{lang}$(1,1)$     
& $0.7423 \pm 0.0410$ 
& $0.7316$ 
& $0.7058 \pm 0.0280$ 
& $0.6931 \pm 0.0310$ 
& $0.6847 \pm 0.0370$ 
& $0.7169$ \\
\hdashline
\textsc{gender}$(1,1)$   
& $0.7314 \pm 0.0460$ 
& $0.7202$ 
& $0.6925 \pm 0.0300$ 
& $0.6813 \pm 0.0340$ 
& $0.6736 \pm 0.0390$ 
& $0.7058$ \\
\hdashline
\textsc{binary}$(2,2)$   
& $0.7198 \pm 0.0520$ 
& $0.7087$ 
& $0.6719 \pm 0.0360$ 
& $0.6594 \pm 0.0410$ 
& $0.6512 \pm 0.0440$ 
& $0.6883$ \\
\hdashline
\textsc{corr}$(2,1)$     
& $0.7091 \pm 0.0672$ 
& $0.6972$ 
& $0.5342 \pm 0.0322$ 
& $0.5190 \pm 0.0730$ 
& $0.5490 \pm 0.0630$ 
& $0.5774$ \\
\hdashline
\textsc{TruthfulQA}               
& $0.6952 \pm 0.0112$ 
& $0.6226$ 
& $0.6234 \pm 0.0066$ 
& $0.6634 \pm 0.0224$ 
& $0.6484 \pm 0.0116$ 
& $0.7144$ \\
\hdashline
\textsc{Sycophancy}                
& $0.4303 \pm 0.0180$ 
& $0.4931$ 
& $0.4735 \pm 0.0078$ 
& $0.4347 \pm 0.0872$ 
& $0.4511 \pm 0.0447$ 
& $0.9827$ \\
\hdashline
\textsc{Refusal}                   
& $0.4781 \pm 0.0248$ 
& $0.4146$ 
& $0.4276 \pm 0.0114$ 
& $0.4554 \pm 0.0188$ 
& $0.4381 \pm 0.0090$ 
& $0.5669$ \\
\hdashline
\textsc{Bias-in-Bios}              
& $0.7924 \pm 0.1602$ 
& $0.7524$ 
& $0.7717 \pm 0.0275$ 
& $0.6262 \pm 0.0206$ 
& $0.5929 \pm 0.0228$ 
& $0.9456$ \\
\bottomrule
\end{tabular}
\end{table*}

\begin{figure*}
    \centering
    \includegraphics[width=\linewidth]{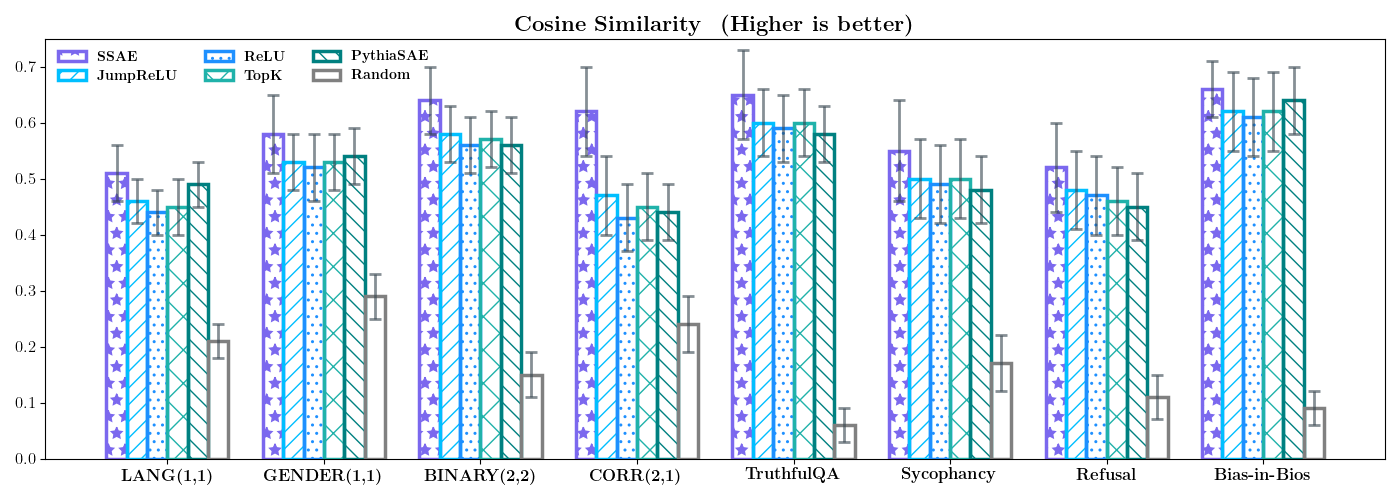}
    \caption{Steering Pythia embeddings shows a similar trend as Gemma, where \isae{}s lead to higher cosine similarities across all datasets, especially peaking in the case of correlated concepts $\textsc{corr}(2, 1)$.}
    \label{fig:cossim_pythia}
\end{figure*}

\subsection{Test of robustness: impact of increasing the encoding dimension}
\label{apx:v}
\begin{figure}
    \centering
    \includegraphics[scale=0.2]{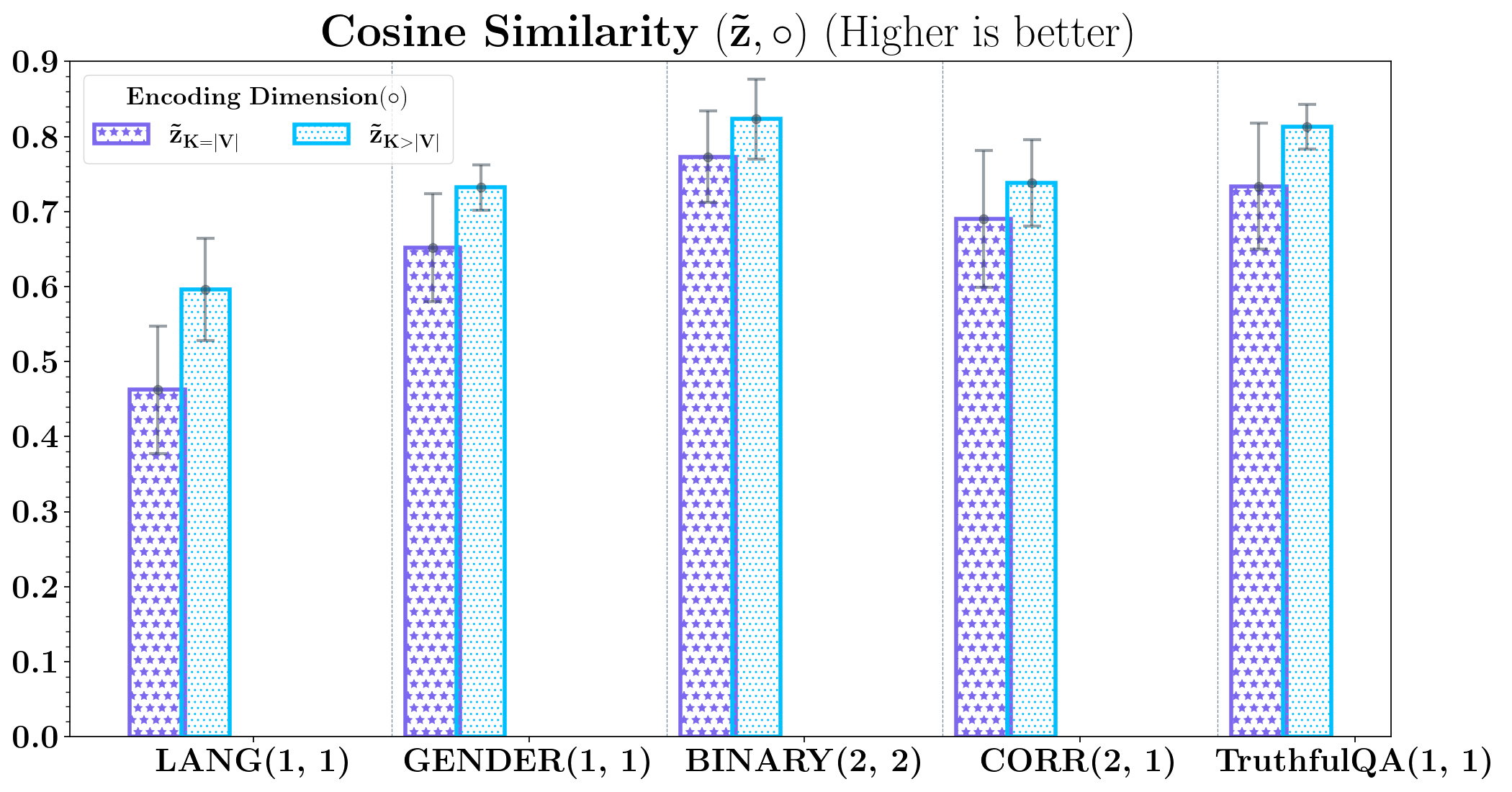}
    \caption{Steering vectors obtained from overcomplete representations consistently achieve higher cosine similarities on all datasets.}
    \label{fig:k_mcc}
\end{figure}
The output of the encoder is predicted as $\hatdeltac_V \in \mathbb{R}^{K}$, where $K = |V|$. In \cref{fig:k_mcc}, we investigate the effect of increasing $K$ beyond $|V|$, i.e., increasing the predicted latent dimension, on MCC values obtained on the dataset with the largest latent dimension, \textsc{cat}($135, 3$). \isae is reasonably disentangled even when the dimension of the concept vectors to be predicted is fairly \textit{misspecified}, whereas the affine baseline's MCC values drop sharply.
\begin{table}[h!]
        \centering
        \caption{For encoding dimension greater than the number of concepts designed to vary in the dataset, MCC values drop significantly and it is unclear if this is due to increased entanglement in the learned representation.}
        \begin{tabular}{>{\raggedright}p{3.9cm}  % First column wider
                    >{\centering\arraybackslash}p{3.9cm}  % Second column
                    >{\centering\arraybackslash}p{3.9cm}}
        \toprule
         & $K = |V|$ & $K > |V|$ \\
         \midrule
        \textsc{lang}($1, 1$) & $0.990 \pm 0.000$ & $ 0.761 \pm 0.015$ \\ \hdashline
        \textsc{gender}($1, 1$)    & $0.991 \pm 0.000$ & $0.720 \pm 0.043$ \\ \hdashline
        \textsc{binary}($2, 2$) & $0.990 \pm 0.001$ & $0.700 \pm 0.002$ \\ \hdashline
        \textsc{corr}($2, 1$) &
        $0.990 \pm 0.001$ & $0.753 \pm 0.009$ \\
        \midrule
        TruthfulQA    & $0.932 \pm 0.008$ & $0.691 \pm 0.005$ \\ 
        \bottomrule
    \end{tabular}
    \label{tab:mcc_greater}
\end{table}
This observation indicates that MCC is insufficient as a standalone criterion for model comparison: it cannot distinguish between representations that differ in their capacity to support reliable steering. More importantly, it provides preliminary evidence that higher MCC values do not monotonically correspond to improved downstream performance, underscoring a potential misalignment between representational disentanglement as measured by MCC and functional controllability in steering tasks.

\subsection{Steering Intermediate Layers of Llama-3.1-8B}
\label{apx:layer}
 \begin{figure}

        \centering
        \includegraphics[scale=0.3]{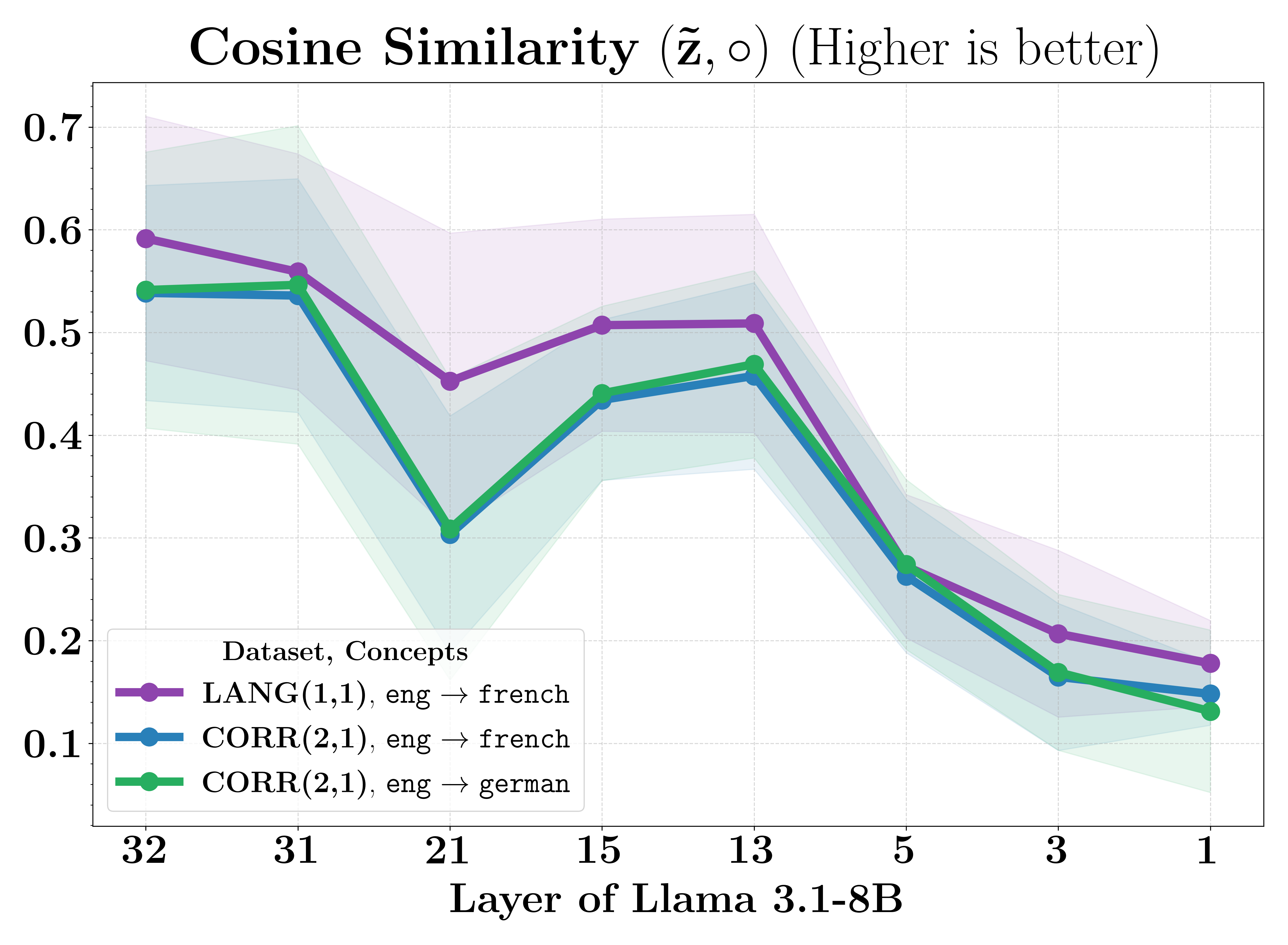}
        \caption{For a 32 layer Llama 3.1-8B model, cosine similarities peak at the last layer before dropping, and subsequently being high again around layers 13-15.}
        \label{fig:layer}

\end{figure}
\cref{fig:layer} shows that the cosine similarities between the target and the steered embedding remain the highest for the last layer, but are almost similar to the middle layers as reported by other results \citep{panickssery2024steeringllama2contrastive, arditi2024refusallanguagemodelsmediated} as well. 
Meanwhile, the cosine similarity achieved using a steering vector from the larger \isae model (Layer 32, \cref{fig:layer}) is consistently higher than that obtained from its smaller counterpart (\cref{fig:cos}). 

\subsection{Bias in Bios Generations}
\label{apx:bias-gen}

Some examples of texts generated from Gemma2-2B by applying the steering vector extracted from {\isae}s trained on contrastive prompts from the Bias in Bios dataset are attached below:

% [inline block 0: 1 envs, 109737 chars -> code_tex | \begin{lstlisting}[language=python][     fontsize=\footnotesize,...]

\subsection{Synthetic Experiments}
\label{apx:synth}

In addition to experiments with LLM embeddings which indicate potential for practical utility, we perform experiments with purely synthetic data in which concepts are precisely known and it is possible to evaluate the model against a known ground-truth. As a teaser to appreciate the relevance of synthetic experiments, consider: even if SAEs consistently learn similar concepts, how can we evaluate if the learnt concepts correspond to the concepts encoded in the input data? 

 We consider $c_1, c_2, ..., c_{|V|}$ to correspond to individual concepts. For language data, we assumed that there are concepts like ``gender" and ``truthfulness" and that they would be represented as one hot vectors $c_1$ and $c_2$. However, such concepts are abstract and it is an assumption that the model would represent both $c_1$ and $c_2$ atomically whereas it is possible that $c_2$ is represented by $2$ atomic concepts and $c_1$ by $1$. It is not possible to resolve such ambiguities since the ground truth representations of $c_1$ and $c_2$ are not known. For the sake of exposition, in purely synthetic data, $c_1$ and $c_2$ are precisely and it is possible to evaluate the model against a known ground truth.

\textbf{Data}. 
For a brief summary of the number of varying concepts within a pair and across all pairs considered, refer to \cref{tab:datasets-synth}. In the case of synthetic data, we generate $\c$ and $\tilde \c$ first to compute $\deltac \coloneqq \tilde \c - \c$, then apply a dense linear transformation $\rmL$ to $\deltac$ to generate $\deltaz$ as $\deltaz = \rmL \deltac$. Importantly, towards the generation of $\c$, we generate zero vectors in $\mathbb{R}^{|V|}$ such that for any given sample, $S$ components are perturbed by samples from a uniform distribution and others remain zero. This is similar to the data generating process in \citep{anders_etal_2024_composedtoymodels_2d} and the conditional distribution of $\deltac_S$ satisfies \cref{ass:suffsupp} of having a density with respect to Lebesgue.

\begin{table}
% \begin{wraptable}{r}{0.55\textwidth}
    \centering
    \renewcommand{\arraystretch}{1.25}
    \caption{Datasets comprise of paired observations $(\z, \tilde \z)$ where $\z$ and $\tilde \z$ vary in concepts $V = \{c_1, c_2, ..., c_{|V|}\}$ across all pairs, such that for any given pair, the maximum number of varying concepts is max($|S|$). \textit{Nomenclature for semi-synthetic datasets follows the rule: identifier of the dataset indicating why we consider it, followed by $|V|$ and max$(|S|)$: \textsc{identifier}($|V|$, max$|S|$)}.}
    \label{tab:datasets-synth}
    \footnotesize{
    \begin{tabular}{c  c  c}
    \toprule
\textbf{Dataset} & $|V|$ & max($|S|$) \\
        \midrule
        \textsc{synth}($3, 2$) & 3 & 2 \\ \hdashline
        \textsc{synth}($4, 3$) & 4 & 3 \\ \hdashline
        \textsc{synth}($10, 7$) & 10 & 7 \\
        \bottomrule
    \end{tabular}
    }
\label{tab:datasets_synth}
% \end{wraptable}
\end{table}

\begin{table}
    \centering
    \renewcommand{\arraystretch}{1.25}
    \caption{The mean \textbf{MCC values between the learnt and the ground truth concept vectors are close to $1$.}}
    \label{tab:mcc_synth}
    \footnotesize{
    \begin{tabular}{c  c  c}
    \toprule
         & \textbf{\isae} & \textbf{\aff} \\
        \hline
        \textsc{synth}($3, 2$)  & $0.999 \pm 0.0001$ & $0.873 \pm 0.0561$ \\ \hdashline
        \textsc{synth}($4, 3$) & $0.999 \pm 0.0011$ & $0.835 \pm 0.0097$ \\ \hdashline
        \textsc{synth}($10, 7$) & $0.993 \pm 0.0005$ & $0.769 \pm 0.0103$ \\ \hline
    \end{tabular}
    }
\end{table}

\textbf{Results}. We estimate $\hatdeltac$ and compare it against $\deltac$ to verify the degree of identifiability of the learnt concept vectors or encoder representations. Since we have the ground truth here, we compute the MCC between $(\hatdeltac, \deltac)$ to measure degree of identifiability. \cref{tab:mcc_synth} shows that the proposed method can identify concepts even for higher values of $|V|$ and max($|S|$) against known ground truth data. Synthetic experiments addressing different facets of the identifiability setting we assume can be readily found in prior work on disentangling representations using sparse shifts \citep{xu2024sparsityprinciplepartiallyobservable, lachapelle2023synergies}. 

% \subsubsection{Sensitivity Analysis}
\label{apx:sens}

\end{document}